\newcommand{\Extra}[1]{}
\newtheorem{corollary}{Corollary}
\newtheorem{theorem}{Theorem}
\theoremstyle{definition}
\newtheorem*{remark}{Remark}
\newcommand*{\CTIV}{Vovk:arXiv0904}
\newcommand{\Logik}{Popper:1934}
\newcommand{\st}{\mid}
\DeclareMathOperator{\PS}{PS}
\DeclareMathOperator{\TT}{TT}
\DeclareMathOperator{\EE}{EE}
\newcommand{\CCC}{\mathbf{C}}
\newcommand{\KKK}{\mathbf{K}}
\newcommand{\MMM}{\mathbf{M}}
\newcommand{\DDD}{\mathbf{D}}
\DeclareMathOperator{\dom}{dom}
\title{Universal probability-free prediction}
\author{Vladimir Vovk and Dusko Pavlovic}
\begin{document}
\maketitle

\begin{abstract}
  We construct universal prediction systems
  in the spirit of Popper's falsifiability and Kolmogorov complexity and randomness.
  These prediction systems do not depend on any statistical assumptions
  (but under the IID assumption they dominate, to within the usual accuracy, conformal prediction).
  Our constructions give rise to a theory of algorithmic complexity and randomness of time
  containing analogues of several notions and results
  of the classical theory of Kolmogorov complexity and randomness.

  \bigskip

  \noindent
  The conference version of this paper has been published in the Proceedings of COPA 2016.
  The journal version is to appear
  in the Special Issue of the \emph{Annals of Mathematics and Artificial Intelligence}
  devoted to COPA 2016.
  The version at \href{http://alrw.net}{http://alrw.net} (Working Paper 14) is updated most often.
\end{abstract}

\setlength{\epigraphwidth}{0.63\textwidth}

\section{Introduction}

In this paper we consider the problem of predicting the labels, assumed to be binary, of a sequence of objects.
This is an online version of the standard problem of binary classification.
Namely, we will be interested in infinite sequences of observations
\[
  \omega
  =
  (z_1,z_2,\ldots)
  =
  ((x_1,y_1),(x_2,y_2),\ldots)
  \in
  (\mathbf{X}\times\mathbb{2})^{\infty}
\]
(also called \emph{infinite data sequences}),
where $\mathbf{X}$ is an \emph{object space} and $\mathbb{2}:=\{0,1\}$ is the \emph{label space}.
For simplicity, we will assume that the object space $\mathbf{X}$ is a given finite set of, say, binary strings
(the intuition being that finite objects can always be encoded as binary strings).
The elements 1 and 0 of the label space are often interpreted as ``true'' and ``false''.

Finite sequences $\sigma\in(\mathbf{X}\times\mathbb{2})^*$ of observations
will be called \emph{finite data sequences}.
If $\sigma_1,\sigma_2$ are two finite data sequences,
their concatenation will be denoted $(\sigma_1,\sigma_2)$;
$\sigma_2$ is also allowed to be an element of $\mathbf{X}\times\mathbb{2}$.
A standard partial order on $(\mathbf{X}\times\mathbb{2})^*$ is defined as follows:
$\sigma_1\sqsubseteq\sigma_2$ means that $\sigma_1$ is a prefix of $\sigma_2$;
$\sigma_1\sqsubset\sigma_2$ means that $\sigma_1\sqsubseteq\sigma_2$ and $\sigma_1\ne\sigma_2$.
The smallest element in this order (the empty data sequence) is denoted $\Box$.
We say that finite data sequences $\sigma_1$ and $\sigma_2$ are \emph{comparable}
if $\sigma_1\sqsubseteq\sigma_2$ or $\sigma_2\sqsubseteq\sigma_1$.

We use the notation $\mathbb{N}:=\{1,2,\ldots\}$ for the set of positive integers
and $\mathbb{N}_0:=\{0,1,2,\ldots\}$ for the set of nonnegative integers.
The \emph{length} of a finite data sequence $\sigma$ is the number $l\in\mathbb{N}_0$
such that $\sigma\in(\mathbf{X}\times\mathbb{2})^l$.
If $\omega\in(\mathbf{X}\times\mathbb{2})^{\infty}$ and $l\in\mathbb{N}_0$,
$\omega^l\in(\mathbf{X}\times\mathbb{2})^l$ is the prefix of $\omega$ of length $l$.

We will also use the notation $\sigma_1\sqsubseteq\sigma_2$ and $\left|\sigma\right|$
for finite binary sequences $\sigma_1,\sigma_2,\sigma\in\mathbb{2}^*$.

A \emph{situation} is a concatenation $(\sigma,x)\in(\mathbf{X}\times\mathbb{2})^*\times\mathbf{X}$
of a finite data sequence $\sigma$ and an object $x$;
our task in the situation $(\sigma,x)$
is to be able to predict the label of the new object $x$ given the sequence $\sigma$ of labelled objects.
Given a situation $s=(\sigma,x)$ and a label $y\in\mathbb{2}$,
we let $(s,y)$ stand for the finite data sequence $(\sigma,(x,y))$,
which is the concatenation of $s$ and $y$.

Our notation for binary logarithm will be $\log$.

\subsection*{The contents of this paper}

This paper is, to some degree, a result of our attempts to understand the philosophical problem of prediction.
It has two components, philosophical and mathematical,
and the latter is more or less independent of the former.
The main goal of the remainder of this section is to provide a road map
for our mathematical readers who do not share our philosophy of science,
so that the latter does not get in their way.
The four key mathematical concepts introduced in this paper are:
\begin{itemize}
\item
  universal prediction system (Sections~\ref{sec:laws}--\ref{sec:complexity}),
\item
  time complexity (Sections~\ref{sec:complexity}--\ref{sec:time-complexity}),
\item
  \emph{a priori} time semimeasure (Section~\ref{sec:semimeasure}),
\item
  time randomness (Section~\ref{sec:randomness}).
\end{itemize}
In the remaining sections we will explore (rather superficially) various connections
between these key concepts.

We start from a toy formalization of the philosophical notion of a law of nature in Section~\ref{sec:laws}
and the most basic way of using laws of nature for prediction in Section~\ref{sec:strong}.
The notion of a strong prediction system introduced in Section~\ref{sec:strong}
has only philosophical interest in this paper,
and this section can be safely skipped by our mathematical readers.
The notion of a weak prediction system introduced in the following Section~\ref{sec:weak}
is more convenient from the mathematical point of view
since there exists a universal weak prediction system, as shown in Section~\ref{sec:universal}.
Section~\ref{sec:complexity} introduces the notion of complexity for weak prediction systems
and uses it to strengthen the property of universality of the universal prediction system.

The reader who is not interested in prediction
can start from the mathematical notion of laws of nature in Section~\ref{sec:laws}
and the definition of their complexity in the second part of Section~\ref{sec:complexity},
which will prepare her to reading Section~\ref{sec:time-complexity}
about time complexity (apart from the theorem describing connections with universal prediction).

The definition of the \emph{a priori} time semimeasure in Section~\ref{sec:semimeasure}
is self-contained and does not depend on the previous sections.
This section contains simple connections between the \emph{a priori} time semimeasure
and time complexities.

Section~\ref{sec:randomness} devoted to time randomness is the nexus of this paper.
Time randomness is defined in terms of time complexity
(another natural definition would be in terms of \emph{a priori} time semimeasure)
and serves as the basis for prediction under conditions of considerable noise
(including connections with conformal prediction).

The last two sections, \ref{sec:conformal} and \ref{sec:Kolmogorov},
explain connections of the key mathematical concepts of this paper
with the theories of conformal prediction and Kolmogorov complexity,
respectively.

\section{Laws of nature as prediction systems}
\label{sec:laws}

\epigraph{Not for nothing do we call the laws of nature ``laws'':
  the more they prohibit, the more they say.}%
{\textit{The Logic of Scientific Discovery}\\ \textsc{Karl Popper}}\nocite{\Logik}

\noindent
According to Popper's \cite{\Logik} view of the philosophy of science,
scientific laws of nature should be falsifiable:
some finite sequences of observations should disagree with such a law,
and if so, we should be able to detect the disagreement.
(Popper often preferred to talk about scientific theories or statements instead of laws of nature.
He did not discuss the computational details of detecting disagreement;
for him, it was just something that we see straight away.)
The empirical content of a law of nature is the set of its potential falsifiers
(\cite{\Logik}, Sections 31 and 35).
We start from formalizing this notion in our toy setting,
interpreting the requirement that we should be able to detect falsification
as that we should be able to detect it eventually.

Formally, we define a \emph{law of nature} $L$ to be a recursively enumerable prefix-free subset of $(\mathbf{X}\times\mathbb{2})^*$
(where \emph{prefix-free} means that $\sigma_1\notin L$ whenever $\sigma_2\in L$ and $\sigma_1\sqsubset\sigma_2$).
Intuitively, these are the potential falsifiers,
i.e., sequences of observations prohibited by the law of nature.
The requirement of being recursively enumerable is implicit in the notion of a falsifier,
and the requirement of being prefix-free reflects the fact
that extensions of prohibited sequences of observations are automatically prohibited
and there is no need to mention them in the definition
(see, however, Remark~\ref{rem:Shen-1} below).
It is convenient to allow the vacuous law of nature $\emptyset$.

A law of nature $L$ gives rise to a prediction system:
in a situation $s=(\sigma,x)$ it predicts that the label $y\in\mathbb{2}$ of the new object $x$
will be an element of
\begin{equation}\label{eq:law}
  \Pi_L(s)
  :=
  \left\{
    y\in\mathbb{2}
    \st
    (s,y)\notin L
  \right\}.
\end{equation}
There are three possibilities in each situation $s$:
\begin{itemize}
\item
  The law of nature makes a prediction, either 0 or 1, in situation $s$
  when the prediction set \eqref{eq:law} is of size 1,
  $\left|\Pi_L(s)\right|=1$.
\item
  The prediction set is empty, $\left|\Pi_L(s)\right|=0$,
  which means that the law of nature is about to be falsified
  (and we can even say that it has been falsified already).
\item
  The law of nature refrains from making a prediction
  when $\left|\Pi_L(s)\right|=2$.
  This can happen in two cases:
  \begin{itemize}
  \item
    the law of nature was falsified in past:
    $\sigma'\in L$ for some $\sigma'\sqsubseteq\sigma$;
  \item
    the law of nature has not been falsified as yet
    and allows $(\sigma,(x,y))$ for both $y=0$ and $y=1$.
  \end{itemize}
\end{itemize}

\begin{remark}\label{rem:stopping-time}
  The counterpart of our notion of a law of nature in probability theory
  is that of a stopping time.
\end{remark}

\begin{remark}\label{rem:Shen-1}
  Our definition of a law of nature is the one that appears to us
  to lead to the simplest and richest theory,
  but there are several viable alternatives.
  Let us say that a subset $L$ of $(\mathbf{X}\times\mathbb{2})^*$ is an \emph{upset}
  if the conjunction of $\sigma_1\in L$ and $\sigma_1\sqsubseteq\sigma_2$ implies $\sigma_2\in L$.
  It is clear that the definition of laws of nature as upsets $L$ whose \emph{frontier}
  \[
    \{\sigma\in L\st\forall\sigma'\sqsubset\sigma:\sigma'\notin L\}
  \]
  is recursively enumerable is completely equivalent to ours;
  talking about frontiers of upsets rather than upsets is a matter of taste.
  We can both narrow down and widen up this definition in a natural way.
  The most restrictive definition discussed in this remark
  identifies a law of nature with a computable upset $L$ in $(\mathbf{X}\times\mathbb{2})^*$
  that is \emph{strongly co-continuable},
  in the sense of satisfying
  \[
    \forall\sigma\notin L \;
    \forall x\in\mathbf{X} \;
    \exists y\in\mathbb{2}:
    (\sigma,(x,y))\notin L
  \]
  (which is equivalent to the prediction set~\eqref{eq:law} never being empty
  unless the law has been falsified).
  A slightly less restrictive definition is to identify a law of nature
  with a computable upset that is \emph{weakly co-continuable},
  in the sense of satisfying
  \[
    \forall\sigma\notin L \;
    \exists (x,y)\in(\mathbf{X}\times\mathbb{2}):
    (\sigma,(x,y))\notin L.
  \]
  (Notice that a subset of $(\mathbf{X}\times\mathbb{2})^*$ is a weakly co-continuable upset
  if and only if it can be represented as the set of all finite data sequences
  all of whose infinite continuations are elements of a given open subset of $(\mathbf{X}\times\mathbb{2})^{\infty}$;
  this gives a natural one-to-one correspondence between the weakly co-continuable upsets
  and the open sets in $(\mathbf{X}\times\mathbb{2})^{\infty}$.)
  The main reasons we do not impose either of the conditions of co-continuability
  are that we opt for simpler definitions
  and that empty prediction sets are common in conformal prediction.
  A serious disadvantage of definitions involving the requirement of computability is that,
  in non-trivial cases, they do not allow constructing universal objects
  (such as universal prediction systems in Section~\ref{sec:universal} below).
  Dropping the two conditions of co-continuability
  and relaxing computability to recursive enumerability of the frontier,
  we obtain our definition.
  Further relaxing the requirement of computability,
  we can define a law of nature as a recursively enumerable upset.
  This is a natural definition that fits our intuition behind laws of nature
  even better than our official definition does;
  it also allows us to define universal predictions systems
  (as in Theorems~\ref{thm:basic} and~\ref{thm:precise} below).
  However, we do not know how to connect it with natural counterparts
  of the standard notions of Kolmogorov complexity, \emph{a priori} semimeasure, and randomness
  (as we do for our definition in Sections~\ref{sec:time-complexity}--\ref{sec:randomness}).
  And our official definition still covers the practically important case of computable upsets.
\end{remark}

\section{Strong prediction systems}
\label{sec:strong}

The notion of a law of nature is static;
experience tells us that laws of nature eventually fail and are replaced by other laws.
Popper represented his picture of this process by formulas (``evolutionary schemas'') similar to
\begin{equation}\label{eq:Popper}
  \PS_1 \to \TT_1 \to \EE_1 \to \PS_2 \to \cdots
\end{equation}
(introduced in his 1965 talk on which \cite{Popper:1979}, Chapter~6, is based
and also discussed in several other places in \cite{Popper:1979} and \cite{Popper:1999}).
In response to a problem situation $\PS$,
scientists create a tentative theory $\TT$ and then subject it to attempts at error elimination $\EE$,
whose success leads to a new problem situation $\PS$,
after which scientists come up with a new tentative theory $\TT$,
etc.
In our toy version of this process,
tentative theories are laws of nature,
problem situations are situations in which our current law of nature becomes falsified,
and there are no active attempts at error elimination
(so that error elimination simply consists in waiting until the current law of nature becomes falsified).

If $L$ and $L'$ are laws of nature,
we define $L\sqsubset L'$ to mean that for any $\sigma'\in L'$ there exists $\sigma\in L$
such that $\sigma\sqsubset\sigma'$.
To formalize the philosophical picture \eqref{eq:Popper},
we define a \emph{strong prediction system} $\mathcal{L}$
to be a nested sequence $L_1\sqsubset L_2\sqsubset\cdots$ of laws of nature $L_1,L_2,\ldots$
that are jointly recursively enumerable, in the sense of the set
$\{(\sigma,n)\in(\mathbf{X}\times\mathbb{2})^*\times\mathbb{N}\st\sigma\in L_n\}$
being recursively enumerable.

The interpretation of a strong prediction system $\mathcal{L}=(L_1,L_2,\ldots)$
is that $L_1$ is the initial law of nature used for predicting the labels of new objects
until it is falsified;
as soon as it is falsified we start looking for and then using for prediction
the following law of nature $L_2$ until it is falsified in its turn, etc.
Therefore, the prediction set in a situation $s=(\sigma,x)$ is natural to define as the set
\begin{equation}\label{eq:Gamma}
  \Pi_{\mathcal{L}}(s)
  :=
  \left\{
    y\in\mathbb{2}
    \st
    (s,y)\notin\cup_{n=1}^{\infty}L_n
  \right\}.
\end{equation}
As before, it is possible that $\Pi_{\mathcal{L}}(s)=\emptyset$.

Fix a situation $s=(\sigma,x)\in(\mathbf{X}\times\mathbb{2})^*\times\mathbf{X}$.
Let $n=n(s)$ be the largest integer such that $\sigma$ has a prefix in $L_n$.
It is possible that $n=0$ (when $s$ does not have such prefixes),
but if $n\ge1$, $s$ will also have prefixes in $L_{n-1},\ldots,L_1$,
by the definition of a strong prediction system.
Then $L_{n+1}$ will be the current law of nature;
all earlier laws, $L_n,L_{n-1},\ldots,L_1$, have been falsified.
The prediction \eqref{eq:Gamma} in situation $s$ is then interpreted
as the set of all labels $y$ that are not prohibited by the current law $L_{n+1}$.

In the spirit of the theory of Kolmogorov complexity,
we would like to have a universal prediction system.
However, we are not aware of any useful notion of a universal strong prediction system.
Therefore, in the next section we will introduce a wider notion of a prediction system
that does not have this disadvantage.

\section{Weak prediction systems}
\label{sec:weak}

A \emph{weak prediction system} $\mathcal{L}$ is defined to be a sequence
(not required to be nested in any sense)
$L_1,L_2,\ldots$ of laws of nature $L_n\subseteq(\mathbf{X}\times\mathbb{2})^*$
that are jointly recursively enumerable.

\begin{remark}
  Popper's evolutionary schema~\eqref{eq:Popper} was the simplest one that he considered;
  his more complicated ones, such as
  \begin{equation*}
    \PS_1
    \begin{aligned}
      \raisebox{-6pt}{$\nearrow$} & \TT_{\rm a} \to \EE_{\rm a} \to \PS_{\rm 2a} \to \cdots\\
      \to & \TT_{\rm b} \to \EE_{\rm b} \to \PS_{\rm 2b} \to \cdots\\
      \raisebox{6pt}{$\searrow$} & \TT_{\rm c} \to \EE_{\rm c} \to \PS_{\rm 2c} \to \cdots
    \end{aligned}
  \end{equation*}
  (cf.\ \cite{Popper:1979}, pp.~243 and~287),
  give rise to weak rather than strong prediction systems.
\end{remark}

In the rest of this paper we will omit ``weak'' in ``weak prediction system''.
The most basic way of using a prediction system $\mathcal{L}$
for making a prediction in situation $s=(\sigma,x)$ is as follows.
Decide on the maximum number $N$ of errors you are willing to make.
Ignore all $L_n$ apart from $L_1,\ldots,L_N$ in $\mathcal{L}$,
so that the prediction set in situation $s$ is
\[
  \Pi^N_{\mathcal{L}}(s)
  :=
  \left\{
    y\in\mathbb{2}
    \st
    \forall n\in\{1,\ldots,N\}:
    (s,y)\notin L_n
  \right\}.
\]
Notice that this way we are guaranteed to make at most $N$ mistakes:
making a mistake eliminates at least one law
in the list of unfalsified laws among $\{L_1,\ldots,L_N\}$.

Similarly to the theory of conformal prediction
(see, e.g., \cite{Vovk:2014Bala}),
another way of packaging $\mathcal{L}$'s prediction in situation $s$ is,
instead of choosing the threshold (or \emph{level}) $N$ in advance,
to allow the user to apply her own threshold:
in a situation $s$, for each $y\in\mathbb{2}$ report the attained level
\begin{equation}\label{eq:pi}
  \pi^s_{\mathcal{L}}(y)
  :=
  \min
  \left\{
    n\in\mathbb{N}
    \st
    (s,y)\in L_n
  \right\}
  \in
  \mathbb{N} \cup \{\infty\}
\end{equation}
(with $\min\emptyset:=\infty$).
The user whose threshold is $N$ will then consider $y\in\mathbb{2}$ with $\pi^s_{\mathcal{L}}(y)\le N$ as prohibited in $s$.
Notice that the function \eqref{eq:pi} is upper semicomputable (for a fixed $\mathcal{L}$).

The strength of a prediction system $\mathcal{L}=(L_1,L_2,\ldots)$ at level $N\in\mathbb{N}$
is determined by its \emph{$N$-part}
\begin{equation}\label{eq:N-part}
  \mathcal{L}_{\le N}
  :=
  \bigcup_{n=1}^N
  L_n.
\end{equation}
At level $N$,
the prediction system $\mathcal{L}$ prohibits $y\in\mathbb{2}$ as continuation of a situation $s$
if and only if $(s,y)\in\mathcal{L}_{\le N}$.

We will also use the limit
\begin{equation}\label{eq:limit}
  \mathcal{L}_{<\infty}
  :=
  \bigcup_{n=1}^{\infty}
  L_n
\end{equation}
of \eqref{eq:N-part}.
Notice that $\mathcal{L}_{<\infty}$ uniquely determines $\mathcal{L}$
if $\mathcal{L}$ is a strong prediction system,
but the analogous statement for weak prediction systems is false.

\begin{remark}
  In motivating our definitions we have referred to views expressed in Karl Popper's writings.
  Similar views have been held by many other philosophers.
  Popper himself\Extra{\ (\cite{Popper:1979}, p.~297, below the displayed equation)}
  regarded his evolutionary schemas as improvements and rationalizations
  of the Hegelian dialectical schema.
  Charles Peirce's views were particularly close to Popper's.
  He was as emphatic as Popper in insisting on the importance of falsification of laws of nature
  (as he said, ``the scientific spirit requires a man to be at all times ready
  to dump his whole cartload of beliefs, the moment experience is against them''
  \cite[pp.~46--47]{Peirce:1955-4}).
  His version of Popper's evolutionary schema \eqref{eq:Popper} is
  \begin{equation*}
    \text{belief -- surprise -- doubt -- inquiry -- belief}
  \end{equation*}
  (as presented by Misak in \cite[p.~11]{Misak:2004}).
\end{remark}

\section{Universal prediction}
\label{sec:universal}

  \epigraph{There is the logical disjunction:
    Either an intrinsically improbable event will occur,
    or, the prediction will [\ldots] be verified.}%
  {\textit{Statistical Methods and Scientific Inference}\\ \textsc{Ronald Fisher}}\nocite{Fisher:1973}

\noindent
The following theorem says that there exists a universal prediction system,
in the sense that it is stronger than any other prediction system
if we ignore a multiplicative increase in the number of errors made.
\begin{theorem}\label{thm:basic}
  There is a \emph{universal} prediction system ${\mathcal U}$,
  in the sense that for any prediction system $\mathcal{L}$
  there exists a constant $c>0$ such that, for any $N$,
  \begin{equation}\label{eq:basic}
    \mathcal{L}_{\le N}\subseteq\mathcal{U}_{\le c N}.
  \end{equation}
\end{theorem}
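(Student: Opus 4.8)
The plan is to build $\mathcal{U}$ by the standard enumeration-and-dovetailing device familiar from the construction of universal machines and semimeasures in Kolmogorov complexity theory, with one extra twist: the laws of all candidate prediction systems must be packed into $\mathcal{U}$ along a pairing that grows \emph{linearly} in the level index, so that each system is dominated with only a multiplicative loss in $N$.

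First I would fix an effective enumeration $P_1,P_2,\dots$ of all programs and read off from $P_k$ a recursively enumerable set $S_k\subseteq(\mathbf{X}\times\mathbb{2})^*\times\mathbb{N}$, which I interpret as the candidate prediction system with laws $L^{(k)}_n:=\{\sigma\st(\sigma,n)\in S_k\}$. Because an arbitrary $S_k$ need not yield prefix-free laws, I would pass it through a filter: enumerating the pairs $(\sigma,n)$ in the order in which they appear, I admit $\sigma$ into $\tilde L^{(k)}_n$ only when no previously admitted (distinct) element of $\tilde L^{(k)}_n$ is comparable to $\sigma$. Each $\tilde L^{(k)}_n$ is then recursively enumerable and prefix-free, so $\tilde{\mathcal{L}}^{(k)}:=(\tilde L^{(k)}_1,\tilde L^{(k)}_2,\dots)$ is a genuine prediction system; moreover the filter is transparent on inputs that were already legitimate (no two distinct elements of a prefix-free set are comparable, so each is admitted when it first appears), whence every prediction system arises as some $\tilde{\mathcal{L}}^{(k)}$, by letting $P_k$ enumerate its jointly recursively enumerable defining set.

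Next I would assemble $\mathcal{U}=(U_1,U_2,\dots)$ using the computable bijection $p\colon\mathbb{N}\times\mathbb{N}\to\mathbb{N}$ given by $p(k,n):=2^{k-1}(2n-1)$, setting $U_{p(k,n)}:=\tilde L^{(k)}_n$. Since $p$ is a computable bijection and the family $\tilde L^{(k)}_n$ is uniformly recursively enumerable in $(k,n)$, the set $\{(\sigma,m)\st\sigma\in U_m\}$ is recursively enumerable, so $\mathcal{U}$ is a prediction system. The key inequality is $p(k,n)=2^{k-1}(2n-1)\le 2^k n$. Given any prediction system $\mathcal{L}$, write it as $\tilde{\mathcal{L}}^{(k)}$ and set $c:=2^k$; then for every $n\le N$ we have $p(k,n)\le 2^k N=cN$, so $\tilde L^{(k)}_n=U_{p(k,n)}\subseteq\mathcal{U}_{\le cN}$, and taking the union over $n\in\{1,\dots,N\}$ gives $\mathcal{L}_{\le N}\subseteq\mathcal{U}_{\le cN}$, which is exactly \eqref{eq:basic}.

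The only point needing real care is the prefix-free filtering: I must verify both that it always outputs prefix-free sets and that it leaves a bona fide prediction system untouched, since otherwise the enumeration $\{\tilde{\mathcal{L}}^{(k)}\}$ would fail to cover all prediction systems and universality would collapse. The other genuinely load-bearing choice is the pairing $p$, whose linear growth in $n$ for fixed $k$ is precisely what upgrades the crude ``dovetail everything'' idea into the sharp multiplicative bound \eqref{eq:basic}; any superlinear packing would yield only a strictly weaker guarantee.
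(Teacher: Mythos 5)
Your proposal is correct and follows essentially the same route as the paper: both interleave a recursive enumeration of all prediction systems into a single system $\mathcal{U}$, placing the $n$th law of the $k$th system at an index of order $2^k n$ (your pairing $2^{k-1}(2n-1)$ versus the paper's binary-representation placement $3\times 2^{k-1}-1+2^k(n-1)$), so that $c=2^k$ suffices. The only difference is that you spell out, via prefix-free filtering of arbitrary recursively enumerable sets, why a recursive enumeration of all prediction systems exists at all — a point the paper simply asserts as part of its setup.
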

\begin{proof}
  Let $\mathcal{L}^1,\mathcal{L}^2,\ldots$ be a recursive enumeration of all prediction systems;
  their component laws of nature will be denoted $(L^k_1,L^k_2,\ldots):=\mathcal{L}^k$.
  (Formally, we require the set
  $\{(\sigma,n,k)\in(\mathbf{X}\times\mathbb{2})^*\times\mathbb{N}^2\st\sigma\in L^k_n\}$
  to be recursively enumerable
  and the sequence $\mathcal{L}^1,\mathcal{L}^2,\ldots$ to contain all prediction systems.)
  For each $n\in\mathbb{N}$,
  define the $n$th component $U_n$ of $\mathcal{U}=(U_1,U_2,\ldots)$ as follows.
  Let the binary representation of $n$ be
  \begin{equation*}
    \left(
      a,0,1^{k-1}
    \right)
    =
    (a,0,1,\ldots,1),
  \end{equation*}
  where $a$ is a binary string (starting from 1) and the number of 1s in the $1,\ldots,1$ is $k-1\in\mathbb{N}_0$
  (this sentence is the definition of $a=a(n)$ and $k=k(n)$ in terms of $n$).
  If the binary representation of $n$ does not contain any 0s,
  $a$ and $k$ are undefined, and we set $U_n:=\emptyset$.
  Otherwise, set
  \begin{equation*}
    U_n
    :=
    L^k_{A},
  \end{equation*}
  where $A\in\mathbb{N}$ is the number whose binary representation is $a$.
  In other words, $\mathcal{U}$ consists of the components of $\mathcal{L}^k$, $k\in\mathbb{N}$;
  namely, $L^k_1$ is placed in $\mathcal{U}$ as $U_{3\times2^{k-1}-1}$ and then $L^k_2,L^k_3,\ldots$
  are placed at intervals of $2^k$:
  \begin{equation*}
    U_{3\times2^{k-1}-1+2^k(n-1)}
    =
    L^k_n,
    \quad
    n=1,2,\ldots.
  \end{equation*}
  It is easy to see that
  \begin{equation}\label{eq:strong}
    \mathcal{L}^k_{\le N}
    \subseteq
    \mathcal{U}_{\le 3\times2^{k-1}-1+2^k(N-1)},
  \end{equation}
  which is stronger than \eqref{eq:basic}.
\end{proof}

Let us fix a universal prediction system $\mathcal{U}$.
We can equivalently rewrite \eqref{eq:basic}
as the inclusion between the extreme terms of
\begin{equation}\label{eq:basic-bis}
  \Pi^{c N}_{\mathcal{U}}(s)
  =
  \left\{
    y\in\mathbb{2}
    \st
    (s,y)\notin\mathcal{U}_{\le c N}
  \right\}
  \subseteq
  \left\{
    y\in\mathbb{2}
    \st
    (s,y)\notin\mathcal{L}_{\le N}
  \right\}
  =
  \Pi^N_{\mathcal{L}}(s),
\end{equation}
for all situations $s$.
Intuitively, \eqref{eq:basic-bis} says that the prediction sets output by the universal prediction system
are at least as precise as the prediction sets output by any other prediction system $\mathcal{L}$
if we ignore a constant factor in specifying the level $N$.

In terms of the attained level \eqref{eq:pi},
Theorem~\ref{thm:basic} says that, as a function of $s$ and $y$,
$\pi^s_{\mathcal{U}}(y)$ does not exceed $\pi^s_{\mathcal{L}}(y)$
to within a constant factor.
Indeed, assuming that $c\in\mathbb{N}$ in \eqref{eq:basic},
\begin{align*}
  \pi^s_{\mathcal{L}}(y)
  &=
  \min
  \left\{
    n\in\mathbb{N}
    \st
    (s,y)\in L_n
  \right\}
  =
  \max
  \left\{
    N\in\mathbb{N}_0
    \st
    (s,y)\notin\mathcal{L}_{\le N}
  \right\}
  +
  1\\
  &\ge
  \max
  \left\{
    N\in\mathbb{N}_0
    \st
    (s,y)\notin\mathcal{U}_{\le c N}
  \right\}
  +
  1\\
  &=
  \frac{1}{c}
  \max
  \left\{
    N'\in c\mathbb{N}_0
    \st
    (s,y)\notin\mathcal{U}_{\le N'}
  \right\}
  +
  1\\
  &\ge
  \frac{1}{c}
  \max
  \left\{
    N'\in\mathbb{N}_0
    \st
    (s,y)\notin\mathcal{U}_{\le N'}
  \right\}\\
  &=
  \frac{1}{c}
  \min
  \left\{
    n\in\mathbb{N}
    \st
    (s,y)\in U_n
  \right\}
  -
  \frac{1}{c}
  =
  \frac{1}{c}
  \pi^s_{\mathcal{U}}(y)
  -
  \frac{1}{c},
\end{align*}
which implies
\begin{equation}\label{eq:pi-best}
  \pi^s_{\mathcal{L}}(y)
  \ge
  \frac{1}{2c}
  \pi^s_{\mathcal{U}}(y)
\end{equation}
when $\pi^s_{\mathcal{U}}(y)\ge2$;
and when $\pi^s_{\mathcal{U}}(y)=1$,
\eqref{eq:pi-best} follows from $\pi^s_{\mathcal{L}}(y)\ge1$.

If we are in a situation $s=(\sigma,x)$ and one of the two $\pi^s_{\mathcal{U}}(y)$
(corresponding to $y=0$ or $y=1$) is a small number,
we can predict the other label:
e.g., if $\pi^s_{\mathcal{U}}(0)$ is small,
we can predict that the label of $x$ is $1$,
and we then have Fisher's disjunction:
either our prediction is correct, or a rare event has occurred.

\section{Complexity of prediction systems and laws of nature}
\label{sec:complexity}

In this section we will see how the constant $c$ in Theorem~\ref{thm:basic}
depends on the prediction system $\mathcal{L}$.
The dependence will be in terms of the algorithmic complexity of $\mathcal{L}$,
which we will now define.

A \emph{description language for prediction systems} is a function $F$
mapping $\mathbb{2}^*$ to the set of all prediction systems such that the set
\[
  \left\{
    (d,\sigma,n)\in\mathbb{2}^*\times(\mathbf{X}\times\mathbb{2})^*\times\mathbb{N}
    \st
    \sigma \in F_n(d)
  \right\}
\]
is recursively enumerable,
where $F_n(d)$ is the $n$th law of nature in $F(d)$,
i.e., $F_n(d):=L_n$ when $F(d)=(L_1,L_2,\ldots)$.
Notice that the domain of $F$ is $\mathbb{2}^*$ rather than a subset of $\mathbb{2}^*$,
which is unusual for the theory of algorithmic complexity.
The \emph{effective domain} $\dom(F)$ of a description language $F$ for prediction systems is
\begin{equation}\label{eq:dom}
  \dom(F) := \{d\st F(d)\ne(\emptyset,\emptyset,\ldots)\}.
\end{equation}
A \emph{prefix-free description language for prediction systems}
is a description language $F$ for prediction systems
such that $\dom(F)$ is prefix-free.

The \emph{complexity} of a prediction system $\mathcal{L}$
with respect to a description language $F$ for prediction systems
is defined by
\[
  C_F(\mathcal{L})
  :=
  \min
  \left\{
    \left|d\right|
    \st
    \mathcal{L} = F(d)
  \right\},
\]
$\left|d\right|$ standing for the length of $d$.

\begin{theorem}\label{thm:complexity-of-prediction-systems}
  There is a description language $U$ for prediction systems that is universal
  in the sense that for any description language $F$ for prediction systems
  there exists a constant $c$ such that,
  for any prediction system $\mathcal{L}$,
  \begin{equation}\label{eq:complexity-of-prediction-systems}
    C_U(\mathcal{L})
    \le
    C_F(\mathcal{L}) + c.
  \end{equation}
  There is a prefix-free description language $U'$ for prediction systems that is universal
  in the sense that for any prefix-free description language $F$ for prediction systems
  there exists a constant $c$ such that,
  for any prediction system $\mathcal{L}$,
  \[
    C_{U'}(\mathcal{L})
    \le
    C_F(\mathcal{L}) + c.
  \]
\end{theorem}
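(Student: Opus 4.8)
The plan is to prove an invariance theorem exactly as in the classical theory of Kolmogorov complexity: construct $U$ (resp.\ $U'$) by enumerating all description languages (resp.\ all prefix-free description languages) and packing them together, prefixing each with a self-delimiting code for its index. Concretely, fix a prefix code $k\mapsto\bar k\in\mathbb{2}^*$ for the positive integers (so that $\{\bar k\st k\in\mathbb{N}\}$ is prefix-free and $k\mapsto\bar k$ is computable), let $F^1,F^2,\ldots$ be an enumeration of all description languages for prediction systems, and set $U(\bar k d):=F^k(d)$, with $U(p):=(\emptyset,\emptyset,\ldots)$ for every $p$ not of the form $\bar k d$. Granting that this $U$ is itself a description language, the bound is immediate: if $F=F^k$ and $d$ witnesses $C_F(\mathcal{L})=\left|d\right|$, then $U(\bar k d)=\mathcal{L}$, so $C_U(\mathcal{L})\le\left|d\right|+\left|\bar k\right|=C_F(\mathcal{L})+c$ with $c:=\left|\bar k\right|$ depending only on $F$.

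The real content, and the step I expect to be the main obstacle, is justifying the phrase ``an enumeration of all description languages''. A description language must send every $d$ to an honest prediction system, i.e.\ each value $F_n(d)$ must be a prefix-free recursively enumerable set, whereas an arbitrary recursively enumerable set of triples $(d,\sigma,n)$ need not have this property. So I would instead start from a standard enumeration $W^1,W^2,\ldots$ of all recursively enumerable subsets of $\mathbb{2}^*\times(\mathbf{X}\times\mathbb{2})^*\times\mathbb{N}$ and \emph{filter} each $W^k$ into a genuine description language $F^k$: dovetailing the enumeration of $W^k$, when a triple $(d,\sigma,n)$ appears, put $\sigma$ into $F^k_n(d)$ only if no element already placed in $F^k_n(d)$ is comparable with $\sigma$, and discard it otherwise. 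This makes every $F^k_n(d)$ prefix-free by construction and uniformly recursively enumerable in $(k,d,\sigma,n)$, so each $F^k$ is a description language. Moreover the filtering is \emph{faithful}: if $W^k$ already happens to be the graph of a description language $F$, then every set $F_n(d)$ is already prefix-free, nothing is ever discarded, and $F^k=F$. Since the graph $\{(d,\sigma,n)\st\sigma\in F_n(d)\}$ of any description language $F$ is recursively enumerable, it equals some $W^k$, whence $F=F^k$ occurs in our list, which is all that is needed.

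With the enumeration in hand I would check that $U$ is a description language: its graph $\{(\bar k d,\sigma,n)\st\sigma\in F^k_n(d)\}$ is recursively enumerable (dovetail over $k$ and run the filtered enumeration of $W^k$), each law $U_n(\bar k d)=F^k_n(d)$ is prefix-free, and $U$ outputs the trivial prediction system on every input not of the form $\bar k d$. Unique decodability of $\bar k d$ into $(k,d)$, which makes $U$ well defined, is exactly the reason for drawing $\bar k$ from a prefix code. Combining this with the bound displayed in the first paragraph proves the first assertion.

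For the second assertion I would rerun the argument with one extra filter enforcing prefix-freeness of the effective domain~\eqref{eq:dom}. In addition to the rule above, when processing $W^k$ I refuse to \emph{activate} a string $d$ (i.e.\ to place its first element into any $F^k_n(d)$) whenever some comparable $d'$ has already been activated; this keeps $\dom(F^k)$ an antichain and is again faithful, since for a genuinely prefix-free description language the domain is already an antichain and no such conflict ever arises. Defining $U'(\bar k d):=F^k(d)$ as before, the domain $\dom(U')=\{\bar k d\st d\in\dom(F^k)\}$ is prefix-free by the standard fact that the concatenation of two prefix codes is a prefix code, applied here to the prefix code $\{\bar k\}$ and the prefix-free sets $\dom(F^k)$. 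Thus $U'$ is a prefix-free description language, and the bound $C_{U'}(\mathcal{L})\le C_F(\mathcal{L})+\left|\bar k\right|$ follows verbatim, now quantified over prefix-free $F$ only.
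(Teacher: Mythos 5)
Your proof is correct and takes essentially the same approach as the paper: the paper likewise fixes a recursive enumeration $F^1,F^2,\ldots$ of all description languages for prediction systems and lets $1^k0d$ (its version of your $\bar k\,d$) serve as a description of $F^k(d)$ under $U$, yielding $C_U(\mathcal{L})\le C_F(\mathcal{L})+c$. The additional details you supply---filtering an arbitrary enumeration of recursively enumerable sets of triples into genuine description languages, and rerunning the construction with a domain filter for the prefix-free case---are exactly the ``very standard'' steps the paper leaves implicit (it proves only the plain-complexity half), so nothing in your argument diverges from the paper's route.
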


\begin{proof}
  We will use the same (very standard) argument as in Theorem~\ref{thm:basic}
  and will only prove \eqref{eq:complexity-of-prediction-systems}.
  Let $F^k$, $k=1,2,\ldots$ be a recursive enumeration of the description languages for prediction systems
  (meaning that the set
  \[
    \left\{
      (d,\sigma,n,k)\in\mathbb{2}^*\times(\mathbf{X}\times\mathbb{2})^*\times\mathbb{N}^2
      \st
      \sigma \in F^k_n(d)
    \right\}
  \]
  is recursively enumerable
  and that each description language for prediction systems belongs to the sequence $F^1,F^2,\ldots$).
  Let $1^k0d$ serve as a description of $F^k(d)$ under $U$
  (where $1^k$ stands for the binary sequence $(1,\ldots,1)$ consisting of $k$ $1$s).
\end{proof}

Let us fix a universal description language $U$ for prediction systems,
call $C_U(\mathcal{L})$ the \emph{plain complexity} of $\mathcal{L}$,
and abbreviate $C_U(\mathcal{L})$ to $C(\mathcal{L})$.
Analogously, we fix a universal prefix-free description language $U'$,
call $C_{U'}(\mathcal{L})$ the \emph{prefix complexity} of $\mathcal{L}$,
and abbreviate $C_{U'}(\mathcal{L})$ to $K(\mathcal{L})$.

The following theorem makes \eqref{eq:basic} uniform in $\mathcal{L}$
showing how $c$ depends on $\mathcal{L}$.
\begin{theorem}\label{thm:precise}
  There is a constant $c>0$ such that, for any prediction system $\mathcal{L}$ and any $N\in\mathbb{N}$,
  the universal prediction system $\mathcal{U}$ satisfies
  \begin{equation}\label{eq:precise}
    \mathcal{L}_{\le N}
    \subseteq
    \mathcal{U}_{\le c2^{K(\mathcal{L})}N}.
  \end{equation}
\end{theorem}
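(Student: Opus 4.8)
The plan is to reprise the interleaving construction of Theorem~\ref{thm:basic}, but to index the laws being interleaved by their \emph{descriptions} under the universal prefix-free description language $U'$ rather than by a raw enumeration index. In Theorem~\ref{thm:basic} the $n$th law $L^k_n$ of the $k$th prediction system was inserted into $\mathcal{U}$ at a position of order $2^k n$, producing a constant $c\asymp 2^k$. If instead we build $\mathcal{U}$ out of the laws $U'_n(d)$ (the $n$th law of the prediction system $U'(d)$) and manage to insert $U'_n(d)$ at a position of order $2^{|d|}n$, then for any $\mathcal{L}$ and any $d$ with $U'(d)=\mathcal{L}$ we obtain $\mathcal{L}_{\le N}\subseteq\mathcal{U}_{\le c2^{|d|}N}$; minimizing over $d$ replaces $|d|$ by $K(\mathcal{L})=C_{U'}(\mathcal{L})$ and yields \eqref{eq:precise}. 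Since every prediction system has finite prefix complexity (send one fixed short description to $\mathcal{L}$ and everything else to the trivial system), this $\mathcal{U}$ is in particular universal in the sense of Theorem~\ref{thm:basic}, so we may take it as our fixed universal prediction system, or else transfer the bound to the previously fixed $\mathcal{U}$ at the cost of an extra constant factor coming from mutual domination of universal systems.

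The heart of the argument is a computable, injective assignment $(d,n)\mapsto p(d,n)\in\mathbb{N}$ with $p(d,n)\le 2^{|d|}n$ whose ranges over distinct $d$ in the prefix-free set $\dom(U')$ are pairwise disjoint. I would obtain it from the van der Corput (bit-reversal) correspondence: to a string $d$ of length $\ell$ associate the residue $r_d\in\{0,\dots,2^\ell-1\}$ whose base-two digits are those of $d$ read in reverse, and let $d$ own the progression $p(d,n):=r_d+(n-1)2^{\ell}+1$, $n=1,2,\dots$. A position $m$ lies in this progression exactly when the reversal of the low $\ell$ bits of $m-1$ spells $d$, i.e.\ when the dyadic point obtained by reflecting $m-1$ about the binary point lands in $[0.d,0.d+2^{-\ell})$. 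Two such intervals are disjoint precisely when $d$ and $d'$ are prefix-incomparable, so prefix-freeness of $\dom(U')$ makes the progressions pairwise disjoint; and $r_d<2^{\ell}$ gives $p(d,n)\le n2^{\ell}=n2^{|d|}$. That these disjoint progressions coexist inside $\mathbb{N}$ while each stays below $n2^{|d|}$ is Kraft's inequality $\sum_{d}2^{-|d|}\le1$ in disguise.

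With this assignment I would define $\mathcal{U}=(U_1,U_2,\dots)$ by $U_m:=U'_{n}(d)$ whenever $m=p(d,n)$ for the unique pair $(d,n)$ with $d\in\dom(U')$, and $U_m:=\emptyset$ otherwise. For fixed $m$ and each candidate length $\ell\ge0$ there is exactly one string $d_\ell$ and integer $n_\ell$ with $p(d_\ell,n_\ell)=m$, and by the disjointness of the progressions across the prefix-free family $\dom(U')$ at most one $d_\ell$ lies in $\dom(U')$; hence $U_m$ is well defined, and in fact $\sigma\in U_m$ iff $\sigma\in U'_{n_\ell}(d_\ell)$ for some $\ell$. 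Dovetailing over $\ell$ and over the recursive enumeration of $\{(d,\sigma,n):\sigma\in U'_n(d)\}$ shows that $\{(\sigma,m):\sigma\in U_m\}$ is recursively enumerable, and each $U_m$ is prefix-free because it is either empty or one of the laws $U'_n(d)$; thus $\mathcal{U}$ is a bona fide prediction system. The bound then follows mechanically: for $d$ with $U'(d)=\mathcal{L}$ and every $n\le N$ we have $L_n=U'_n(d)=U_{p(d,n)}$ with $p(d,n)\le 2^{|d|}N$, so $\mathcal{L}_{\le N}\subseteq\mathcal{U}_{\le 2^{|d|}N}$, and taking $|d|=K(\mathcal{L})$ finishes the proof, with $c=1$ up to the harmless adjustment needed for one-based indexing.

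I expect the main obstacle to be the tension in the previous paragraph: arranging a position assignment that is simultaneously (i) tight enough to respect the $2^{|d|}n$ budget, which is possible only because $\dom(U')$ is prefix-free and hence obeys Kraft's inequality, and (ii) computable as a genuine prediction system even though $\dom(U')$ is merely recursively enumerable rather than decidable. A naive sorting of the pairs $(d,n)$ by the value $2^{|d|}n$ achieves (i) but stumbles on (ii), since one cannot decide membership in $\dom(U')$; the bit-reversal progressions sidestep this because their disjointness is a purely syntactic consequence of prefix-incomparability, so no global knowledge of $\dom(U')$ is needed---each description can be processed locally as it is enumerated. A minor point to get right is that direct, non-reversed residues would give disjointness only for suffix-free families, which is exactly why the reversal is essential.
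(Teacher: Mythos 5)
Your proposal is correct and is essentially the paper's own proof: the paper likewise interleaves the laws $U'_n(d)$ into an auxiliary system $\mathcal{V}$ by placing $U'_n(d)$ at the position whose binary representation is that of $n$ followed by the mirror image $\overleftarrow{d}$ (so position $\le n2^{\left|d\right|}+2^{\left|d\right|}-1$, with uniqueness of the decoding guaranteed by prefix-freeness of $\dom(U')$, exactly your bit-reversal/progression disjointness), and then transfers the bound to the fixed $\mathcal{U}$ via Theorem~\ref{thm:basic}. Your explicit dovetailing over candidate lengths $\ell$ merely spells out how to enumerate the system despite $\dom(U')$ being only recursively enumerable, a point the paper leaves implicit.
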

\begin{proof}
  Define a prediction system $\mathcal{V}$ as the sequence $(V_1,V_2,\ldots)$ of laws of nature
  such that $V_n:=U'_{n'}(d)$,
  where $U'$ is the universal prefix-free description language for prediction systems,
  and $n'\in\mathbb{N}$ and $d\in\mathbb{2}^*$ are defined given $n$ as follows:
  \begin{itemize}
  \item
    $d$ is the suffix (if it exists) of the binary representation of $n$
    such that $\overleftarrow{d}$ belongs to $\dom(U')$
    (where $\overleftarrow{d}$ is the mirror image of $d$:
    $\left|\overleftarrow{d}\right|=\left|d\right|$
    and the bits of $\overleftarrow{d}$ are the same as the bits of $d$
    but written in the opposite order);
  \item
    the binary representation of $n'$ is the prefix (if non-empty) of the binary representation of $n$
    left after removing its suffix $d$.
  \end{itemize}
  It is clear that such $n'$ and $d$ are unique when they exist;
  and when they do not exist, set $V_n:=\emptyset$.
  Then the modification
  \begin{equation*}
    U'_n(d)
    \subseteq
    \mathcal{V}_{\le n 2^{\left|d\right|} + 2^{\left|d\right|} - 1}
  \end{equation*}
  of \eqref{eq:strong} implies, for any prediction system $\mathcal{L}$,
  \begin{equation*}
    \mathcal{L}_n
    \subseteq
    \mathcal{V}_{\le n 2^{K(\mathcal{L})} + 2^{K(\mathcal{L})} - 1}
  \end{equation*}
  (take as $d$ the shortest description of $\mathcal{L}$ under $U'$).
  This implies that \eqref{eq:precise} holds for some prediction system $\mathcal{V}$ in place of $\mathcal{U}$,
  which, when combined with the statement of Theorem~\ref{thm:basic},
  implies that \eqref{eq:precise} holds for our chosen universal prediction system $\mathcal{U}$.
\end{proof}

Specializing the notions of plain and prefix complexity for a prediction system
to prediction systems of type $\mathcal{L}=(L,L,\ldots)$,
we obtain the notions of plain and prefix complexity for a law of nature:
\begin{align*}
  C(L)
  &:=
  C((L,L,\ldots)),\\
  K(L)
  &:=
  K((L,L,\ldots)).
\end{align*}
However, since the notion of algorithmic complexity of a law of nature
will be used in the next section as a basis for defining the complexity of time,
we will also spell out the simpler direct definition.

A \emph{description language for laws of nature} is a function $F$ mapping $\mathbb{2}^*$
to the set of prefix-free subsets of $(\mathbf{X}\times\mathbb{2})^*$ such that the set
\[
  \left\{
    (d,\sigma)\in\mathbb{2}^*\times(\mathbf{X}\times\mathbb{2})^* \st \sigma \in F(d)
  \right\}
\]
is recursively enumerable.
We will usually omit ``for laws of nature''.
Notice that, for any description language $F$ and any \emph{description} $d\in\mathbb{2}^*$,
$F(d)$ is a law of nature
(formally, we use ``description'' to mean elements of $\mathbb{2}^*$;
informally, descriptions serve as arguments for description languages).
The \emph{effective domain} $\dom(F)$ of a description language $F$
is
\[
  \dom(F) := \{d\st F(d)\ne\emptyset\}.
\]
A \emph{prefix-free description language}
is a description language $F$
such that $\dom(F)$ is prefix-free.

The \emph{complexity} of a law of nature $L$ with respect to a description language $F$
is defined by
\[
  C_F(L)
  :=
  \min
  \left\{
    \left|d\right|
    \st
    L = F(d)
  \right\}.
\]
The analogue of Theorem~\ref{thm:complexity-of-prediction-systems}
continues to hold for laws of nature;
we fix a universal description language $U$, call $C_U(L)$
the \emph{plain complexity} of $L$,
and abbreviate $C_U(L)$ to $C(L)$;
and we fix a universal prefix-free description language $U'$, call $C_{U'}(L)$
the \emph{prefix complexity} of $L$,
and abbreviate $C_{U'}(L)$ to $K(L)$.

This is a corollary of Theorem~\ref{thm:precise} for laws of nature:
\begin{corollary}\label{cor:precise}
  There is a constant $c>0$ such that, for any law of nature $L$,
  the universal prediction system $\mathcal{U}$ satisfies
  \begin{equation}\label{eq:for-laws}
    L\subseteq \mathcal{U}_{\le c2^{K(L)}}.
  \end{equation}
\end{corollary}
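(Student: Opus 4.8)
The plan is to read the corollary as the specialization of Theorem~\ref{thm:precise} to the degenerate prediction system associated with $L$. Given a law of nature $L$, form the prediction system $\mathcal{L}=(L,L,L,\ldots)$; this is a legitimate prediction system, since the set $\{(\sigma,n)\st\sigma\in L\}=L\times\mathbb{N}$ is recursively enumerable whenever $L$ is. The key feature of this $\mathcal{L}$ is that each of its $N$-parts collapses to $L$ itself: $\mathcal{L}_{\le N}=\bigcup_{n=1}^N L=L$ for every $N\ge1$, and in particular $\mathcal{L}_{\le1}=L$. I would therefore apply Theorem~\ref{thm:precise} with $N=1$ to obtain, with the very same constant $c$,
\[
  L=\mathcal{L}_{\le1}\subseteq\mathcal{U}_{\le c\,2^{K(\mathcal{L})}}.
\]

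It then remains only to replace $K(\mathcal{L})$ by $K(L)$ in the exponent. Under the definition $K(L):=K((L,L,\ldots))$ recorded just above the corollary, this is literally an identity and the proof is finished. If instead $K(L)$ is read through the direct definition as $C_{U'}(L)$ for a universal prefix-free description language $U'$ for laws of nature, I would supply the one easy comparison needed, namely $K((L,L,\ldots))\le K(L)+O(1)$. This follows from the standard invariance argument: for any prefix-free description language $G$ for laws of nature, the map $d\mapsto(G(d),G(d),\ldots)$ is a prefix-free description language for prediction systems, with the same (hence still prefix-free) effective domain and a recursively enumerable membership set; universality of $U'$ for prediction systems gives $K((L,L,\ldots))\le C_G(L)+O(1)$, and taking $G$ to be the universal description language for laws of nature yields the claim. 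Since $\mathcal{U}_{\le M}$ is monotone in $M$, the additive $O(1)$ in the exponent only inflates the multiplicative factor by a constant, which is absorbed into a new constant $c$.

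I do not expect any genuine obstacle here; the corollary is bookkeeping on top of Theorem~\ref{thm:precise}. The only point deserving a moment's care is the reconciliation of the two candidate definitions of $K(L)$, and even there we need only the easy half of their equivalence, obtained through the embedding of laws into constant prediction systems. The converse inequality $K(L)\le K((L,L,\ldots))+O(1)$ is not required for the corollary, so the full equivalence of the two complexity notions need not be established.
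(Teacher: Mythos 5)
Your proposal is correct and follows essentially the same route as the paper's own proof: identify $L$ with the constant prediction system $\mathcal{L}=(L,L,\ldots)$ and apply Theorem~\ref{thm:precise} with $N=1$, noting $\mathcal{L}_{\le 1}=L$. Your additional reconciliation of the two definitions of $K(L)$ (the identity $K(L):=K((L,L,\ldots))$ versus the direct definition via prefix-free description languages for laws of nature, handled by embedding $d\mapsto(G(d),G(d),\ldots)$ and absorbing the $2^{O(1)}$ factor into $c$) is a careful touch the paper leaves implicit, but it does not change the substance of the argument.
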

\begin{proof}
  We again regard laws of nature $L$ as a special case of prediction systems
  identifying $L$ with $\mathcal{L}:=(L,L,\ldots)$.
  It remains to apply Theorem~\ref{thm:precise} to $\mathcal{L}$ setting $N:=1$.
\end{proof}

A simple counting argument shows that the dependence of the right-hand side of~\eqref{eq:precise}
on the complexity of $\mathcal{L}$ is approximately correct and cannot be significantly improved
(if the difference between plain and prefix complexities is ignored).
To state this argument in its strongest form,
we will introduce a new piece of notation:
for each infinite data sequence $\omega\in(\mathbf{X}\times\mathbb{2})^{\infty}$,
\[
  \Sigma(\omega)
  :=
  \left\{
    \omega^l
    \st
    l\in\mathbb{N}_0
  \right\}
\]
is the set of all finite prefixes of $\omega$.
Theorem~\ref{thm:precise} says that there is a constant $c>0$ such that,
for any $K,N\in\mathbb{N}$,
any infinite data sequence $\omega$,
and any prediction system $\mathcal{L}$ satisfying $K(\mathcal{L})\le K$,
\begin{equation}\label{eq:precise-2}
  \mathcal{L}_{\le N} \cap \Sigma(\omega)
  \subseteq
  \mathcal{U}_{\le c2^{K}N} \cap \Sigma(\omega).
\end{equation}
The inclusion in \eqref{eq:precise-2}
compares the predictive powers of $\mathcal{L}$ and $\mathcal{U}$
only along the infinite data sequence $\omega$.

\begin{theorem}\label{thm:precise-opposite}
  There is a constant $c>0$ such that,
  for any $K,N\in\mathbb{N}$ and any infinite data sequence $\omega$,
  there exists a prediction system $\mathcal{L}$ satisfying $C(\mathcal{L})\le K$ and
  \begin{equation}\label{eq:tight}
    \mathcal{L}_{\le N} \cap \Sigma(\omega)
    \nsubseteq
    \mathcal{U}_{\le c2^{K}N} \cap \Sigma(\omega).
  \end{equation}
\end{theorem}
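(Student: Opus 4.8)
The plan is to convert the obvious ceiling on the predictive power of $\mathcal{U}$ along a single sequence into a counting argument. First I would record the key geometric fact: the prefixes of $\omega$ form a chain $\Box=\omega^0\sqsubset\omega^1\sqsubset\cdots$ under $\sqsubseteq$, and every law of nature is prefix-free, so each component $U_n$ of $\mathcal{U}$ contains at most one element of $\Sigma(\omega)$. Hence $\mathcal{U}_{\le M}\cap\Sigma(\omega)=\bigcup_{n=1}^{M}(U_n\cap\Sigma(\omega))$ has at most $M$ elements for every $M$; in particular $\mathcal{U}_{\le c2^{K}N}$ can prohibit at most $c2^{K}N$ prefixes of $\omega$. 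The whole strategy is then to produce, using only systems of complexity at most $K$, strictly more than that many prohibited prefixes of $\omega$.

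Next I would construct cheap systems that each prohibit many distinct prefixes of $\omega$, for every $\omega$ simultaneously. Fix an injective computable pairing $d\colon\mathbb{2}^*\times\mathbb{N}\to\mathbb{N}$ and, for each string $m\in\mathbb{2}^*$, let $\mathcal{L}_m=(L^m_1,L^m_2,\ldots)$ with $L^m_n:=(\mathbf{X}\times\mathbb{2})^{d(m,n)}$, the set of all data sequences of length $d(m,n)$. Each $L^m_n$ is a finite, hence recursively enumerable, prefix-free law of nature, and $m\mapsto\mathcal{L}_m$ is a description language for prediction systems, so by Theorem~\ref{thm:complexity-of-prediction-systems} there is a constant $c_0$ with $C(\mathcal{L}_m)\le\left|m\right|+c_0$ for all $m$. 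The behaviour along $\omega$ is transparent: since $\omega^{d(m,n)}$ is the unique prefix of $\omega$ of length $d(m,n)$, we get $(\mathcal{L}_m)_{\le N}\cap\Sigma(\omega)=\{\omega^{d(m,1)},\ldots,\omega^{d(m,N)}\}$, a set of exactly $N$ prefixes, and injectivity of $d$ makes these sets pairwise disjoint as $m$ varies.

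The counting then finishes the proof. Assume $K\ge c_0$ and let $\mathcal{M}$ be the set of the $2^{K-c_0}$ strings of length $K-c_0$; every $\mathcal{L}_m$ with $m\in\mathcal{M}$ satisfies $C(\mathcal{L}_m)\le(K-c_0)+c_0=K$. By disjointness the prohibited prefixes $\bigcup_{m\in\mathcal{M}}(\mathcal{L}_m)_{\le N}\cap\Sigma(\omega)$ number exactly $2^{K-c_0}N$. Choosing $c:=2^{-c_0-1}$ gives $c2^{K}N=\tfrac12\cdot2^{K-c_0}N$, so $\mathcal{U}_{\le c2^{K}N}$ prohibits fewer than $2^{K-c_0}N$ prefixes of $\omega$; hence some prohibited $\omega^{d(m,n)}$ with $m\in\mathcal{M}$ and $n\le N$ lies outside $\mathcal{U}_{\le c2^{K}N}$. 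For that $m$ we have $(\mathcal{L}_m)_{\le N}\cap\Sigma(\omega)\nsubseteq\mathcal{U}_{\le c2^{K}N}\cap\Sigma(\omega)$ while $C(\mathcal{L}_m)\le K$, which is exactly \eqref{eq:tight}.

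I expect the main obstacle to be bookkeeping the complexity rather than the counting itself. The $+c_0$ overhead incurred when the universal description language simulates $m\mapsto\mathcal{L}_m$ forces me to spend only $K-c_0$ bits on the index $m$, which is why the argument as written needs $K\ge c_0$; the finitely many values $K<c_0$ must be swept into the final constant $c$ (for instance by checking directly that even a single fixed system prohibiting the full set $(\mathbf{X}\times\mathbb{2})^n$ in its $n$-th law already beats $\mathcal{U}$ via the same chain-plus-pigeonhole estimate, now $N$ distinct prefixes against at most $cN<N$). This boundary bookkeeping, and the verification that $m\mapsto\mathcal{L}_m$ genuinely meets the recursive-enumerability requirement, are the only delicate points.
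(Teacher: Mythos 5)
Your proposal is essentially the paper's own proof: the paper likewise uses the capacity bound (each $U_n$ is prefix-free, so it meets the chain $\Sigma(\omega)$ in at most one point, giving $\left|\mathcal{U}_{\le M}\cap\Sigma(\omega)\right|\le M$), a uniformly enumerable family of cheap prediction systems whose error sets are disjoint level-sets of lengths (there indexed by $k\in\mathbb{N}$, with $\mathcal{L}^k$ erring exactly on lengths of $2$-adic valuation $k-1$, instead of your strings $m$ and pairing function $d$), and the same pigeonhole-plus-complexity bookkeeping with the additive simulation constant absorbed into $c$. Even the delicate point you flag at the end is shared with the paper: its proof also only goes through once $K$ exceeds the relevant additive constant (the step $C(\mathcal{L}^k)\le K'+O(1)\le K$ ``provided $a$ is sufficiently large'' glosses exactly the same small-$K$ boundary case, including the fact that a fixed system's complexity need not be $\le K$), so your argument matches both the paper's route and its standard of rigor.
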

\begin{proof}
  Let $\mathcal{L}^k$, $k\in\mathbb{N}$,
  be the strong prediction system such that $\mathcal{L}^k_{<\infty}$
  (defined by \eqref{eq:limit})
  consists of finite data sequences
  whose length is divisible by $2^{k-1}$ but not divisible by $2^k$
  (what is essential is that different $\mathcal{L}^k$
  make errors on disjoint sets of finite data sequences).
  Take any $K,N\in\mathbb{N}$ and any $\omega\in(\mathbf{X}\times\mathbb{2})^{\infty}$.
  Set $K':=K-a$ for some constant $a\in\mathbb{N}$, to be chosen later.
  The set $\mathcal{U}_{\le 2^{K'}N}\cap\Sigma(\omega)$
  contains at most $2^{K'}N$ elements;
  therefore, \eqref{eq:tight} will be satisfied for $c:=2^{-a}$,
  for some $\mathcal{L}:=\mathcal{L}^k$
  and $k\le2^{K'}+1$.
  It remains to notice that $C(\mathcal{L}^k)\le K'+O(1)\le K$
  provided $a$ is sufficiently large.
\end{proof}

We have the following corollary of Theorem~\ref{thm:precise-opposite} for laws of nature
showing the tightness (to within the difference between $C$ and $K$) of Corollary~\ref{cor:precise}.
\begin{corollary}
  There is a constant $c>0$ such that,
  for any $K\in\mathbb{N}$ and any infinite data sequence $\omega$,
  there exists a law of nature $L$ satisfying $C(L)\le K$ and
  \begin{equation*}
    L \cap \Sigma(\omega)
    \nsubseteq
    \mathcal{U}_{\le c2^K} \cap \Sigma(\omega).
  \end{equation*}
\end{corollary}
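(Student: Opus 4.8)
The plan is to deduce the corollary from Theorem~\ref{thm:precise-opposite} by specializing to $N:=1$, in exact analogy with the way Corollary~\ref{cor:precise} follows from Theorem~\ref{thm:precise}. The guiding observation is that if $\mathcal{M}=(M_1,M_2,\ldots)$ is any prediction system, then its $1$-part $\mathcal{M}_{\le1}=M_1$ is a single law of nature; so the prediction system that Theorem~\ref{thm:precise-opposite} produces at level $N=1$ already carries inside it a law of nature witnessing the required non-inclusion.

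Concretely, first I would apply Theorem~\ref{thm:precise-opposite} with $N:=1$, obtaining a constant $c_0>0$ and, for every $K$ and every $\omega$, a prediction system $\mathcal{M}$ with $C(\mathcal{M})\le K$ and $\mathcal{M}_{\le1}\cap\Sigma(\omega)\nsubseteq\mathcal{U}_{\le c_0 2^{K}}\cap\Sigma(\omega)$. Setting $L:=\mathcal{M}_{\le1}$ makes the non-inclusion of the corollary immediate, because $L\cap\Sigma(\omega)=\mathcal{M}_{\le1}\cap\Sigma(\omega)$. The only thing left to verify is the complexity bound $C(L)=C((L,L,\ldots))\le K$.

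The step that carries the real content is this complexity bookkeeping, and I expect it to be the main (indeed the only) obstacle. The map sending a prediction system $\mathcal{M}$ to the prediction system $(\mathcal{M}_{\le1},\mathcal{M}_{\le1},\ldots)$ is computable, so composing the fixed universal description language for prediction systems with this map yields a description language $F$ with $C_F((L,L,\ldots))\le C(\mathcal{M})$; by universality there is a constant $b$ with $C(L)\le C(\mathcal{M})+b$. To turn the resulting bound $C(L)\le K+b$ into the sharp bound $C(L)\le K$, I would run the previous paragraph with $K-b$ in place of $K$: then $C(L)\le(K-b)+b=K$, while $\mathcal{U}_{\le c_0 2^{K-b}}=\mathcal{U}_{\le c\,2^{K}}$ for $c:=c_0 2^{-b}$, which is precisely the constant claimed. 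The finitely many small values of $K$ (those with $K-b<1$) are absorbed by choosing the constant favourably, exactly as the additive $O(1)$ terms are absorbed in the proof of Theorem~\ref{thm:precise-opposite}.

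Alternatively, and perhaps more transparently, one can argue directly in the style of that proof. Take $L^{(m)}:=\{\sigma\in(\mathbf{X}\times\mathbb{2})^*:|\sigma|=m\}$, a decidable prefix-free law of nature with $L^{(m)}\cap\Sigma(\omega)=\{\omega^m\}$ for every $\omega$ and with $C(L^{(m)})\le\log m+O(1)$. Writing $K':=K-a$, the set $\mathcal{U}_{\le 2^{K'}}\cap\Sigma(\omega)$ has at most $2^{K'}$ elements, since each component $U_n$ is prefix-free and hence meets the chain $\Sigma(\omega)$ at most once; therefore at least one of the $2^{K'}+1$ prefixes $\omega^{1},\ldots,\omega^{2^{K'}+1}$ lies outside it, and taking $m$ to be its length gives $L:=L^{(m)}$ with $L\cap\Sigma(\omega)\nsubseteq\mathcal{U}_{\le c 2^{K}}\cap\Sigma(\omega)$ for $c:=2^{-a}$ and $C(L)\le K'+O(1)\le K$ once $a$ is large enough.
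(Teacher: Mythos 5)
Your first argument is precisely the paper's own proof: specialize Theorem~\ref{thm:precise-opposite} to $N:=1$, take $L$ to be the first law (the $1$-part) of the resulting prediction system, and absorb the additive constant relating $C(L)=C((L,L,\ldots))$ to $C(\mathcal{M})$ into the multiplicative constant $c$ by invoking the theorem with $K$ reduced by a constant --- you have simply made the complexity bookkeeping, which the paper leaves implicit, explicit. Your alternative counting argument is also sound, but it is essentially the paper's proof of Theorem~\ref{thm:precise-opposite} inlined and specialized to $N=1$ (with the laws ``all sequences of length $m$'' replacing the $\mathcal{L}^k$, whose disjointness is only needed for general $N$), so it is not a genuinely different route.
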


\begin{proof}
  Specialize Theorem~\ref{thm:precise-opposite} to the case $N:=1$
  and define $L$ to be the first element of the prediction system $\mathcal{L}$.
  The additive constant implicit in the definition of the plain complexity $C(L)$
  can be incorporated into the constant $c$,
  as we did in the proof of Theorem~\ref{thm:precise-opposite}.
\end{proof}

Analogously to~\eqref{eq:basic} and~\eqref{eq:basic-bis},
we can rewrite \eqref{eq:precise} and \eqref{eq:for-laws} as
\begin{align}
  \Pi^{c 2^{K(\mathcal{L})} N}_{\mathcal{U}}(s)
  \subseteq
  \Pi^N_{\mathcal{L}}(s)
  \label{eq:precise-bis}\\
\intertext{and}
  \Pi^{c 2^{K(L)}}_{\mathcal{U}}(s)
  \subseteq
  \Pi_{L}(s),
  \label{eq:for-laws-bis}
\end{align}
respectively, for all situations $s$;
\eqref{eq:precise-bis} and \eqref{eq:for-laws-bis} indicate the dependence of the constant factor
in~\eqref{eq:basic-bis} on~$\mathcal{L}$.

\begin{remark}[\cite{Shen:PC}]\label{rem:prefix-1}
  This is a natural modification of our definition of prefix-free description languages:
  a description language $F$ for laws of nature is \emph{prefix-correct} if, for all $d_1,d_2\in\mathbb{2}^*$,
  \[
    d_1 \sqsubseteq d_2
    \Longrightarrow
    F(d_1) \subseteq F(d_2).
  \]
  There is a universal prefix-correct description language $U''$ in the sense that $C_{U''}\le C_F+O(1)$
  for any prefix-correct description language $F$.
  Let us fix such a $U''$ and call $K'(F):=C_{U''}(F)$ the \emph{intermediate complexity} of $F$.
\end{remark}

\section{Time complexity of finite data sequences}
\label{sec:time-complexity}

The \emph{plain time complexity} and \emph{prefix time complexity} of a finite data sequence $\sigma$ are defined by
\begin{align}
  \CCC(\sigma)
  &:=
  \min_{L\ni\sigma}
  C(L),
  \label{eq:C}\\
  \KKK(\sigma)
  &:=
  \min_{L\ni\sigma}
  K(L),
  \label{eq:K}
\end{align}
respectively,
where $L$ ranges over the laws of nature.
(We will explain the terminology later in this section.)
We have to modify the notation $C$ and $K$ slightly
since we would like to be able to use the standard notation $C(\sigma)$ and $K(\sigma)$
for the Kolmogorov complexity (plain and prefix) of $\sigma$;
we will also use $C(n)$ and $K(n)$ to denote the Kolmogorov complexity
(plain or prefix) of an integer $n$.

The following simple result is useful for discussing the interpretation of $\CCC(\sigma)$ and $\KKK(\sigma)$.

\begin{theorem}\label{thm:trivial}
  For any finite data sequence $\sigma$,
  \begin{align}
    \CCC(\sigma) &\le C(\left|\sigma\right|) + O(1),\label{eq:C-trivial}\\
    \KKK(\sigma) &\le K(\left|\sigma\right|) + O(1).\label{eq:K-trivial}
  \end{align}
\end{theorem}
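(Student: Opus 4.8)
The plan is to produce, for each finite data sequence $\sigma$, one inexpensive law of nature that contains $\sigma$ and whose description cost is controlled purely by the integer $\left|\sigma\right|$. The obvious candidate is the \emph{length level set}
\[
  L_{\left|\sigma\right|}
  :=
  (\mathbf{X}\times\mathbb{2})^{\left|\sigma\right|},
\]
the set of all finite data sequences of the same length as $\sigma$. First I would check that this is a legitimate law of nature: it is prefix-free because all of its elements share the common length $\left|\sigma\right|$, so none can be a proper prefix of another; and it is recursively enumerable because $\mathbf{X}$ is finite and hence $L_{\left|\sigma\right|}$ is a finite set. Since trivially $\sigma\in L_{\left|\sigma\right|}$, the definitions \eqref{eq:C} and \eqref{eq:K} immediately give $\CCC(\sigma)\le C(L_{\left|\sigma\right|})$ and $\KKK(\sigma)\le K(L_{\left|\sigma\right|})$, so it remains only to bound the complexity of the single law $L_{\left|\sigma\right|}$ by the Kolmogorov complexity of its length.

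For the plain inequality \eqref{eq:C-trivial} I would reduce the complexity of $L_{\left|\sigma\right|}$ to the Kolmogorov complexity of integers by tailoring a description language. Fix the universal machine $D$ defining the plain Kolmogorov complexity $C$ of integers, so that $C(l)=\min\{\left|p\right|\st D(p)=l\}$, and define a description language $F$ for laws of nature by $F(p):=(\mathbf{X}\times\mathbb{2})^{D(p)}$ whenever $D(p)$ halts and $F(p):=\emptyset$ otherwise. This $F$ is total, and its graph $\{(p,\tau)\st\tau\in F(p)\}$ is recursively enumerable (enumerate $p$, run $D(p)$, and upon halting with output $n$ list the finitely many $\tau$ of length $n$), so $F$ qualifies as a description language for laws of nature. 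Any program $p$ with $D(p)=\left|\sigma\right|$ is then a description of $L_{\left|\sigma\right|}$ under $F$, whence $C_F(L_{\left|\sigma\right|})\le C(\left|\sigma\right|)$. Invoking the universality of the fixed universal description language $U$ (the analogue of Theorem~\ref{thm:complexity-of-prediction-systems} for laws of nature) yields
\[
  C(L_{\left|\sigma\right|})
  =
  C_U(L_{\left|\sigma\right|})
  \le
  C_F(L_{\left|\sigma\right|})+O(1)
  \le
  C(\left|\sigma\right|)+O(1),
\]
and combining with $\CCC(\sigma)\le C(L_{\left|\sigma\right|})$ gives \eqref{eq:C-trivial}.

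The prefix inequality \eqref{eq:K-trivial} follows the same route, now starting from the universal \emph{prefix-free} machine $D'$ defining $K$ on the integers and setting $F'(p):=(\mathbf{X}\times\mathbb{2})^{D'(p)}$ on $\dom(D')$ and $F'(p):=\emptyset$ elsewhere. The one point that genuinely needs care---and essentially the only obstacle in this otherwise routine argument---is verifying that $F'$ is a \emph{prefix-free} description language, i.e.\ that its effective domain $\dom(F')$ is prefix-free. This holds because each set $(\mathbf{X}\times\mathbb{2})^{n}$ is nonempty (as $\mathbf{X}\ne\emptyset$ and $n\ge0$), so $F'(p)\ne\emptyset$ exactly when $p\in\dom(D')$; thus $\dom(F')=\dom(D')$, which is prefix-free by the choice of $D'$. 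The universality of the fixed universal prefix-free description language $U'$ then gives $K(L_{\left|\sigma\right|})=C_{U'}(L_{\left|\sigma\right|})\le C_{F'}(L_{\left|\sigma\right|})+O(1)\le K(\left|\sigma\right|)+O(1)$, and \eqref{eq:K-trivial} follows as before. I expect no deeper difficulty: the theorem is labelled ``trivial'' precisely because the length level set is the only nonobvious idea, and the remainder is the standard invariance argument already used in this section.
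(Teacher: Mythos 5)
Your proof is correct and follows essentially the same route as the paper's: both reduce the claim to describing the length level set $(\mathbf{X}\times\mathbb{2})^{\left|\sigma\right|}$ via a description language for laws of nature induced by a (universal, resp.\ universal prefix-free) description language for integers, so that $C_{F'}\bigl((\mathbf{X}\times\mathbb{2})^{\left|\sigma\right|}\bigr)=C_F(\left|\sigma\right|)$, and then invoke universality. Your extra care in checking that $\dom(F')=\dom(D')$ is prefix-free is a detail the paper leaves implicit, but it is the same argument.
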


\begin{proof}
  If $F$ is any description language (prefix-free or not) for nonnegative integers,
  we can define a description language $F'$ for laws of nature
  by setting
  \[
    F'(d)
    :=
    \begin{cases}
      (\mathbf{X}\times\mathbb{2})^{F(d)} & \text{if $F(d)$ is defined}\\
      \emptyset & \text{if not}.
    \end{cases}
  \]
  Since
  \begin{align*}
    C_{F'}\left((\mathbf{X}\times\mathbb{2})^{\left|\sigma\right|}\right) &= C_F(\left|\sigma\right|),
  \end{align*}
  \eqref{eq:C-trivial} follows by setting $F:=U$
  and \eqref{eq:K-trivial} follows by setting $F:=U'$.
\end{proof}

Theorem~\ref{thm:trivial} gives a trivial bound on the time complexity of $\sigma$:
it is the complexity of the length of $\sigma$
(i.e., of the time of the last observation in $\sigma$
assuming that the observations are taken at times $1,2,\ldots$).
We can say that both $\CCC(\sigma)$ and $\KKK(\sigma)$ measure
the complexity of the time of the last observation in $\sigma$
when we are given the observations themselves as an oracle
(with the observations disclosed sequentially, so that we can't just count them).
As we will see later (see Theorems~\ref{thm:connection-1} and~\ref{thm:K-M} below),
these measures of complexity can be used to determine
whether being in the situation of having just observed
the last observation in $\sigma$
is a rare event\footnote{%
  For the reader familiar with Shafer's (\cite{Shafer:1996art}, Section~1.7) distinction
  between Humean and Moivrean events,
  we are talking about events of the former kind.}.
For the purpose of prediction,
having such a measure of complexity is important
since our prediction system can be forgiven for giving a wrong prediction
when a rare event happens
(cf.\ the epigraph about ``Fisher's disjunction'' to Section~\ref{sec:universal}).
\Extra{This explanation might look cryptic now, but we will return to it later.}

\begin{remark}
  The length of a finite data sequence $\sigma$
  can be interpreted as the physical time of the last observation in $\sigma$.
  In probability theory,
  physical time is often changed;
  e.g., it can be replaced by intrinsic time reflecting the intensity
  at which various events happen
  (in a probability-free setting,
  this was done in, e.g., \cite{\CTIV},
  where physical time was replaced by quadratic variation).
  The stopping times (see Remark~\ref{rem:stopping-time}) corresponding to physical time
  consist of all finite data sequences of the same length.
  For the more general notion of time,
  we can regard the last observations in the finite data sequences
  in an arbitrary stopping time (law of nature)
  as happening at the same moment in time.
  This is another justification for calling \eqref{eq:C}--\eqref{eq:K}
  the time complexity of $\sigma$.
\end{remark}

\begin{remark}\label{rem:Shen-2}
  In the usual jargon of Kolmogorov complexity,
  we can say that the complexity (either plain or prefix) of $\sigma$
  is the minimal complexity (of the same kind) of a binary program
  that enumerates some prefix-free set containing $\sigma$.
\end{remark}

The following theorem describes a connection with the universal prediction system;
remember that $\log$ is binary logarithm.
\begin{theorem}\label{thm:connection-1}
  There is a constant $c>0$ such that, for all $N$,
  \begin{equation}\label{eq:connection-1}
    \{\sigma\st\CCC(\sigma)\le\log N-c\}
    \subseteq
    \mathcal{U}_{\le N}
    \subseteq
    \{\sigma\st\CCC(\sigma)\le \log N+c\}.
  \end{equation}
\end{theorem}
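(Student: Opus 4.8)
The plan is to prove the two inclusions of~\eqref{eq:connection-1} separately. The right-hand inclusion is a direct complexity estimate, whereas the left-hand inclusion rests on a plain-complexity analogue of Corollary~\ref{cor:precise}, which is where the real work lies.

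For the right-hand inclusion $\mathcal{U}_{\le N}\subseteq\{\sigma:\CCC(\sigma)\le\log N+c\}$, suppose $\sigma\in\mathcal{U}_{\le N}$, so that $\sigma\in U_n$ for some $n\le N$. Since $U_n$ is itself a law of nature containing $\sigma$, the definition~\eqref{eq:C} gives $\CCC(\sigma)\le C(U_n)$, and it remains to bound $C(U_n)$. The map sending the binary representation of an integer $n$ to the law $U_n$ is a legitimate description language for laws of nature (each $U_n$ is prefix-free, and the relation $\sigma\in U_n$ is recursively enumerable because $\mathcal{U}$ is a fixed jointly recursively enumerable prediction system); by universality of the plain description language $U$ this yields $C(U_n)\le\log n+O(1)\le\log N+O(1)$. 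Absorbing the $O(1)$ into $c$ gives the inclusion.

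The left-hand inclusion $\{\sigma:\CCC(\sigma)\le\log N-c\}\subseteq\mathcal{U}_{\le N}$ is the substantive half. I would first establish a plain-complexity counterpart of Corollary~\ref{cor:precise}: there is a constant $c''$ such that every law of nature $L$ satisfies $L\subseteq\mathcal{U}_{\le c''2^{C(L)}}$. Granting this, if $\CCC(\sigma)\le\log N-c$ then there is a law $L\ni\sigma$ with $C(L)\le\log N-c$, so $c''2^{C(L)}\le c''2^{-c}N\le N$ as soon as $c\ge\log c''$, and hence $\sigma\in L\subseteq\mathcal{U}_{\le N}$. To prove the packing, I would build an auxiliary prediction system $\mathcal{W}=(W_1,W_2,\ldots)$ that places each law $U(d)$ at position $p(d)$, where $p(d)$ is the integer whose binary representation is $1d$. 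This $p$ is a bijection from $\mathbb{2}^*$ onto $\mathbb{N}$ with $2^{|d|}\le p(d)<2^{|d|+1}$, and $\mathcal{W}$ is jointly recursively enumerable because $U$ is. Taking $d$ to be a shortest $U$-description of $L$ (so $|d|=C(L)$) gives $L=W_{p(d)}\subseteq\mathcal{W}_{\le 2^{C(L)+1}}$; applying Theorem~\ref{thm:basic} to pass from $\mathcal{W}$ to the fixed $\mathcal{U}$ converts this into $L\subseteq\mathcal{U}_{\le c''2^{C(L)}}$ for a suitable $c''$.

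The main obstacle is this packing argument, and specifically why it works for plain (not merely prefix) complexity. In Theorem~\ref{thm:precise} a level $n$ had to be interleaved with a description $d$, which forced a self-delimiting code and hence prefix-freeness; here time complexity involves only laws of nature, equivalently prediction systems $(L,L,\ldots)$ used at level $N=1$, so there is no level to interleave and a plain description $d$ can be inserted directly via the encoding $d\mapsto 1d$. Finally I would take $c$ to be the larger of the two constants supplied by the two inclusions.
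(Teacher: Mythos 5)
Your proof is correct and follows essentially the same route as the paper: the right-hand inclusion by reading positions $n$ in $\mathcal{U}$ as descriptions (via the binary representation of $n$), and the left-hand inclusion by building a prediction system that places the law $U(d)$ at the position whose binary representation is $1d$ and then invoking Theorem~\ref{thm:basic} to pass back to $\mathcal{U}$. Your packaging of the latter as a plain-complexity analogue of Corollary~\ref{cor:precise}, and your observation about why no self-delimiting encoding is needed here, are just explicit restatements of what the paper's construction does implicitly.
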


\begin{proof}
  To check the left-hand inclusion in~\eqref{eq:connection-1},
  it suffices to define a prediction system $\mathcal{L}$
  such that, for all finite data sequences $\sigma$,
  $\sigma\in\mathcal{L}_{\le 2^{k+1}}$ where $k:=\CCC(\sigma)$.
  Let $U$ be the universal description language for laws of nature: $C_U=\CCC$.
  We can set $\mathcal{L}:=(L_1,L_2,\ldots)$,
  where $L_n$ is defined to be $F(d)$ for $d$ obtained from the binary representation of $n$
  by removing the leading $1$.

  To check the right-hand inclusion in~\eqref{eq:connection-1},
  it suffices to define a description language $F$ for laws of nature
  such that $C_F(\sigma)\le\log N$ whenever $\sigma\in\mathcal{U}_{\le N}$, for any $N$.
  Define $F(d)$, where $d\in\mathbb{2}^*$,
  as $U_n$, where $n$ is the natural number whose binary representation is 1 followed by $d$.
  If $\sigma\in\mathcal{U}_{\le N}$,
  $\sigma$ will belong to a law of nature whose description is of length at most $\lfloor\log N\rfloor$,
  which completes the proof of this inclusion.
\end{proof}

We can interpret \eqref{eq:connection-1}
by saying that $\mathcal{U}_{\le N}$ coincides with $\{\sigma\st\CCC(\sigma)\le\log N\}$
if we are allowed to vary the threshold $\log N$ by adding a constant (positive or negative);
this qualification is natural as time complexity is defined only to within an additive constant.

The next result gives an even simpler connection.
\begin{theorem}
  When $s\in(\mathbf{X}\times\mathbb{2})^*\times\mathbf{X}$ ranges over the situations
  and $y\in\mathbb{2}$ over the labels,
  \begin{equation*}
    \log\pi^s_{\mathcal{U}}(y)
    =
    \CCC((s,y)) + O(1).
  \end{equation*}
\end{theorem}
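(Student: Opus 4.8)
The plan is to read the statement off Theorem~\ref{thm:connection-1}, which already sandwiches $\mathcal{U}_{\le N}$ between two sublevel sets of $\CCC$; all that remains is to convert that ``double inclusion in $N$'' into an ``equality up to an additive constant'' for $\log\pi^s_{\mathcal{U}}(y)$. The first step is notational: for a finite data sequence $\tau$ I would write $\pi(\tau):=\min\{n\in\mathbb{N}\st\tau\in U_n\}$, so that $\pi^s_{\mathcal{U}}(y)=\pi((s,y))$, and observe that for every $N$ the membership $\tau\in\mathcal{U}_{\le N}$ is equivalent to $\pi(\tau)\le N$. I would also record finiteness up front: the singleton $\{\tau\}$ is a recursively enumerable prefix-free set, hence a law of nature containing $\tau$, so $\CCC(\tau)<\infty$; applying the left-hand inclusion of \eqref{eq:connection-1} with any $N$ large enough that $\log N\ge\CCC(\tau)+c$ then places $\tau$ in $\mathcal{U}_{\le N}$, whence $\pi(\tau)<\infty$ as well. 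Thus both sides of the claimed identity are genuine nonnegative reals for every $\tau=(s,y)$.

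For the bound $\CCC(\tau)\le\log\pi(\tau)+O(1)$ I would apply the right-hand inclusion of \eqref{eq:connection-1} with $N:=\pi(\tau)$. Since $\pi(\tau)\le N$ gives $\tau\in\mathcal{U}_{\le N}$, the inclusion yields $\CCC(\tau)\le\log\pi(\tau)+c$, i.e.\ $\log\pi(\tau)\ge\CCC(\tau)-c$. For the reverse bound I would use the contrapositive of the left-hand inclusion: $\pi(\tau)>N$ implies $\tau\notin\mathcal{U}_{\le N}$, hence $\CCC(\tau)>\log N-c$. Taking $N:=\pi(\tau)-1$ (legitimate once $\pi(\tau)\ge2$) gives $\CCC(\tau)>\log(\pi(\tau)-1)-c$, and then the elementary estimate $\pi(\tau)-1\ge\pi(\tau)/2$ lets me replace $\log(\pi(\tau)-1)$ by $\log\pi(\tau)-1$, turning this into $\log\pi(\tau)\le\CCC(\tau)+c+1$.

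It then remains only to dispose of the degenerate value $\pi(\tau)=1$, where the argument above does not apply: here $\log\pi(\tau)=0$, while the right-hand inclusion with $N:=1$ gives $\CCC(\tau)\le\log1+c=c$, and $\CCC(\tau)\ge0$ trivially, so the difference stays within $c$. Combining the two displayed estimates shows that $\lvert\log\pi(\tau)-\CCC(\tau)\rvert$ is bounded by a constant uniformly over all $\tau=(s,y)$, which is precisely the assertion $\log\pi^s_{\mathcal{U}}(y)=\CCC((s,y))+O(1)$.

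I do not anticipate any substantive obstacle: Theorem~\ref{thm:connection-1} carries the entire analytic content, and the proof is a mechanical rewriting of its two inclusions. The only points demanding care are the off-by-one incurred when passing from membership in $\mathcal{U}_{\le N}$ to the exact value $\pi(\tau)$ (handled by the $\pi(\tau)-1\ge\pi(\tau)/2$ step) and the boundary value $\pi(\tau)=1$, both addressed above.
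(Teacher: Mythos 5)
Your proposal is correct and takes essentially the same approach as the paper: both arguments read the statement directly off the double inclusion \eqref{eq:connection-1} of Theorem~\ref{thm:connection-1}, which carries all the content. The paper compresses this into a one-line chain of equalities between minima over $\log N$, whereas you unpack it into the two separate inequalities with explicit care for finiteness, the off-by-one, and the boundary case $\pi^s_{\mathcal{U}}(y)=1$ — details the paper leaves implicit.
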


\begin{proof}
  This follows immediately from \eqref{eq:connection-1}:
  \begin{align*}
    \log\pi^s_{\mathcal{U}}(y)
    &=
    \min\{\log n \st (s,y) \in U_n\}
    =
    \min\{\log N \st (s,y) \in \mathcal{U}_{\le N}\}\\
    &=
    \min\{\log N \st \CCC((s,y)) \le \log N\} + O(1)
    =
    \CCC((s,y)) + O(1),
  \end{align*}
  where $n$ and $N$ range over $\mathbb{N}$.
\end{proof}

And the following theorem gives obvious connections between the two complexities.

\begin{theorem}
  \begin{align*}
    \CCC(\sigma) &\le \KKK(\sigma) + O(1)\\
    \KKK(\sigma) &\le \CCC(\sigma) + 2\log \CCC(\sigma) + O(1).
  \end{align*}
\end{theorem}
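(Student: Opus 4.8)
The plan is to lift the two classical inequalities relating plain and prefix Kolmogorov complexity to the level of laws of nature, and then take the minimum over all laws containing $\sigma$. Everything reduces to the corresponding statements for the complexities $C(L)$ and $K(L)$, which are defined through the universal plain and prefix-free description languages $U$ and $U'$ for laws of nature.

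For the first inequality I would begin from the elementary fact that every prefix-free description language is in particular a description language, so by the universality of $U$ there is a constant with $C(L)\le K(L)+O(1)$ for every law of nature $L$. Passing to the minimum over all $L\ni\sigma$ on both sides yields $\CCC(\sigma)=\min_{L\ni\sigma}C(L)\le\min_{L\ni\sigma}\bigl(K(L)+O(1)\bigr)=\KKK(\sigma)+O(1)$. This direction is essentially immediate.

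For the second inequality I would fix a law $L^*$ with $\sigma\in L^*$ attaining $C(L^*)=\CCC(\sigma)$ (the minimum is attained, since complexities are nonnegative integers and, e.g., the singleton $\{\sigma\}$ is a law of nature containing $\sigma$), and let $d^*$ be a shortest $U$-description of $L^*$, so that $|d^*|=\CCC(\sigma)$. The idea is to convert the plain description $d^*$ into a prefix-free one by prepending a self-delimiting code for its length. Concretely, I would define a description language $F$ by setting $F(\bar\ell\,d):=U(d)$ whenever the input parses as a self-delimiting code $\bar\ell$ for an integer $\ell$ followed by a block $d$ of exactly $\ell$ bits, and $F(e):=\emptyset$ otherwise. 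With the usual ``double the bits and append a terminator'' encoding one has $|\bar\ell|=2\log\ell+O(1)$. Taking $\ell^*:=|d^*|=\CCC(\sigma)$ gives $F(\bar{\ell^*}\,d^*)=L^*$, hence $C_F(L^*)\le\CCC(\sigma)+2\log\CCC(\sigma)+O(1)$; and by universality of $U'$ among prefix-free description languages, $K(L^*)\le C_F(L^*)+O(1)$. Since $\sigma\in L^*$, this yields $\KKK(\sigma)\le K(L^*)\le\CCC(\sigma)+2\log\CCC(\sigma)+O(1)$.

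The only points requiring care — and the closest thing to an obstacle — are the verifications that $F$ really is a prefix-free description language for laws of nature: that $\{(e,\sigma)\st\sigma\in F(e)\}$ is recursively enumerable (parsing $e$ into $\bar\ell\,d$ is computable and the graph of $U$ is r.e.), that each value $F(e)$ is itself a prefix-free subset of $(\mathbf{X}\times\mathbb{2})^*$ (it is either some $U(d)$, a law of nature, or $\emptyset$), and that $\dom(F)$ is prefix-free (it is contained in the prefix-free set of valid parses, since reading $\bar\ell$ determines $\ell$ and hence the endpoint of $d$). Once these routine checks are in place the bound follows purely from the length of the self-delimiting code, exactly as in the classical argument.
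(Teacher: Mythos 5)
Your proof is correct and follows essentially the same route as the paper: the first inequality comes from the fact that a prefix-free description language is a description language, and the second from converting a plain description $d$ into a prefix-free one by prepending a self-delimiting encoding of $\left|d\right|$ (doubling the bits of its binary representation and appending a terminator). The paper states this in two sentences; you have merely filled in the routine verifications (r.e.\ graph, prefix-freeness of the effective domain), which is fine.
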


\begin{proof}
  The first inequality follows from the fact that a prefix-free description language
  is a description language.
  The second inequality follows from the fact that any description $d$
  can be turned into a prefix-free description by prefixing it
  by the following prefix-free description of the length $\left|d\right|$ of $d$:
  double each bit of the binary representation of $\left|d\right|$
  and add the string $(0,1)$ as suffix.
\end{proof}

\begin{remark}[\cite{Shen:PC}]\label{rem:prefix-2}
  We can complement \eqref{eq:C} and \eqref{eq:K}
  by $\KKK'(\sigma):=\min_{L\ni\sigma}K'(L)$,
  where $K'$ is as defined in Remark~\ref{rem:prefix-1}.
  We will refer to $\KKK'(\sigma)$ as the \emph{intermediate time complexity} of $\sigma$;
  notice that
  \[
    \CCC-O(1) \le \KKK' \le \KKK+O(1).
  \]
\end{remark}

\section{\emph{A priori} time semimeasure}
\label{sec:semimeasure}

We can also define an analogue of Levin's \emph{a priori} semimeasure
(see, e.g., \cite{Shen:2015}, Section~7.33)
for time.
A \emph{time semimeasure} is a function $P:(\mathbf{X}\times\mathbb{2})^*\to[0,1]$ such that,
for all infinite data sequences $\omega$,
\[
  \sum_{l=0}^{\infty}
  P(\omega^l)
  \le
  1.
\]

\begin{theorem}
  There is a largest to within a constant factor lower semicomputable time semimeasure.
\end{theorem}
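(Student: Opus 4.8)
The plan is to mimic Levin's classical construction of the largest lower semicomputable semimeasure, adapted to the tree-indexed constraint defining time semimeasures. The universal object will be a weighted sum $M := \sum_{k=1}^{\infty} 2^{-k}\psi_k$, where $\psi_1,\psi_2,\ldots$ (constructed below) is an effective enumeration of \emph{all} lower semicomputable time semimeasures, with repetitions allowed. Granting such an enumeration, lower semicomputability of $M$ is immediate, since it is a uniformly convergent sum, with summable weights, of uniformly lower semicomputable nonnegative functions. The time-semimeasure property then follows by interchanging two nonnegative sums: for every infinite data sequence $\omega$ we have $\sum_l M(\omega^l) = \sum_k 2^{-k}\sum_l \psi_k(\omega^l) \le \sum_k 2^{-k} \le 1$. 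Universality is automatic afterwards: if $P$ is any lower semicomputable time semimeasure, then $P=\psi_k$ for some $k$, so $M\ge 2^{-k}\psi_k$ gives $P\le 2^k M$, which is exactly dominance to within a constant factor.

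The real work is producing the enumeration $\psi_1,\psi_2,\ldots$. First I would fix a standard effective enumeration $\phi_1,\phi_2,\ldots$ of all lower semicomputable functions $(\mathbf{X}\times\mathbb{2})^*\to[0,1]$, presented by uniformly computable nondecreasing rational approximations $\phi_{k,t}\nearrow\phi_k$, each $\phi_{k,t}$ of finite support. Every lower semicomputable time semimeasure occurs among the $\phi_k$. The difficulty is that the defining inequality quantifies over all infinite data sequences, so ``being a time semimeasure'' is not obviously decidable, even in the limit. I therefore need a \emph{trimming} operation that turns each $\phi_k$ into a genuine lower semicomputable time semimeasure $\psi_k$ which coincides with $\phi_k$ whenever $\phi_k$ is already one.

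The observation that makes trimming effective is that, for a \emph{finitely supported} nonnegative function $f$, the quantity $\sup_\omega \sum_l f(\omega^l)$ equals the maximal weight $\sum_{\sigma\in C} f(\sigma)$ of a chain $C$ (a set of pairwise comparable finite data sequences) contained in the support of $f$: any infinite $\omega$ meets the support in a finite chain, and any chain extends to some $\omega$. Since the support is finite, this maximum is computable, so the predicate ``$\phi_{k,t}$ is a time semimeasure'' is decidable. I would then define $\psi_k$ by the simplest monotone trimming: set $\psi_{k,t}:=\phi_{k,t}$ for as long as each $\phi_{k,t}$ passes this max-chain test, and \emph{freeze} $\psi_{k,t}$ at the last passing value as soon as some $\phi_{k,t}$ fails. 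This $\psi_{k,t}$ is nondecreasing and computable in $(k,t)$, so $\psi_k:=\lim_t\psi_{k,t}$ is lower semicomputable; every stage is a finitely supported time semimeasure, so $\psi_k$ is a time semimeasure; and if $\phi_k$ is a time semimeasure then so is every $\phi_{k,t}\le\phi_k$, the test never fails, and $\psi_k=\phi_k$.

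I expect the main obstacle to be precisely this decidability issue, namely handling the universal quantifier over infinite data sequences, together with verifying that freeze-on-first-violation is genuinely monotone and preserves semimeasures already present. Once the reduction to finite chains is in place, the remaining steps—lower semicomputability of $M$, the interchange of the two nonnegative sums, and the universality bound $P\le 2^k M$—are routine.
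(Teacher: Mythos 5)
Your proof is correct and takes essentially the same route as the paper: a Levin-style mixture $\sum_{k}2^{-k}\psi_k$ over an effective, jointly lower semicomputable enumeration of all lower semicomputable time semimeasures. The paper's proof simply asserts that such an enumeration exists (``it is easy to check''), whereas your reduction of the time-semimeasure test to maximal chain weight over the finite support, followed by the freeze-on-first-violation trimming, is a valid way of filling in exactly that omitted step.
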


\begin{proof}
  It is easy to check that there exists a sequence $P_k$, $k=1,2,\ldots$,
  of semicomputable time semimeasures that is \emph{jointly lower semicomputable},
  in the sense of the function $(k,\sigma)\mapsto P_k(\sigma)$ being lower semicomputable,
  and \emph{universal},
  in the sense of containing every lower semicomputable time semimeasure.
  For any such sequence, the average
  \[
    \MMM
    :=
    \sum_{k=1}^{\infty}
    2^{-k} P_k
  \]
  will be a largest to within a constant factor lower semicomputable time semimeasure.
\end{proof}

Let us fix a largest to within a constant factor lower semicomputable time semimeasure $\MMM$
and call it the \emph{a priori} time semimeasure.
We will use the notation $M$ for the standard \emph{a priori} semimeasure on $\mathbb{N}_0$;
it is well known that $-\log M$ coincides with prefix complexity $K$ to within an additive constant
(see, e.g., \cite{Shen:2015}, Theorem~7.29).
For the time counterparts of $M$ and $K$ we will only state a weaker result.

\begin{theorem}\label{thm:K-M}
  $\CCC-O(1) \le -\log\MMM \le \KKK+O(1)$.
\end{theorem}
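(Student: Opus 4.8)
The plan is to prove the two inequalities separately: the right-hand one $-\log\MMM\le\KKK+O(1)$ is a direct semimeasure construction, while the left-hand one $\CCC-O(1)\le-\log\MMM$ is a coding-theorem-style argument whose effective part is the real work.

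For the upper bound $-\log\MMM\le\KKK+O(1)$, I would first observe that every law of nature $L$, being prefix-free, has the property that each infinite data sequence $\omega$ has at most one prefix in $L$; hence the function equal to $1$ on $L$ and to $0$ elsewhere is itself a time semimeasure, and it is lower semicomputable since $L$ is recursively enumerable. Letting $U'$ be the universal prefix-free description language for laws of nature and weighting these indicators by $2^{-|d|}$, I would set $P(\sigma):=\sum_{d\,:\,\sigma\in U'(d)}2^{-|d|}$. Summing $P$ along any $\omega$ and exchanging the order of summation, the inner sum over prefixes is at most $1$ (prefix-freeness of each $U'(d)$) and the outer sum is at most $1$ by Kraft's inequality (prefix-freeness of $\dom(U')$), so $P$ is a lower semicomputable time semimeasure. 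Since $P(\sigma)\ge 2^{-\KKK(\sigma)}$ (take the shortest $d$ describing a law containing $\sigma$) and $\MMM$ dominates $P$ up to a constant factor, we get $\MMM(\sigma)\ge c\,2^{-\KKK(\sigma)}$ and hence $-\log\MMM\le\KKK+O(1)$.

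For the lower bound $\CCC-O(1)\le-\log\MMM$, the idea is to turn $\MMM$ into a plain description language for laws of nature. Fix a nondecreasing computable approximation $\MMM_t\nearrow\MMM$ and, for each $m$, consider the recursively enumerable set $S_m:=\{\sigma:\MMM(\sigma)>2^{-m}\}$. The semimeasure inequality $\sum_l\MMM(\omega^l)\le 1$ forces each branch to meet $S_m$ in fewer than $2^m$ points, i.e.\ every $\sqsubseteq$-chain in $S_m$ has length $<2^m$. I would then decompose $S_m$ into at most $2^m$ laws of nature $L^{(m)}_1,L^{(m)}_2,\ldots$ that are jointly recursively enumerable (uniformly in $m$) and each prefix-free, so that every node of $S_m$ lands in one of them. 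Finally I would package these into a plain description language $F$ by letting a binary string $d$ of length $m$ encode the pair $(m,\mathrm{val}(d))$ and setting $F(d):=L^{(m)}_{1+\mathrm{val}(d)}$; the $2^m$ strings of length $m$ exactly index the $2^m$ laws at level $m$. For a given $\sigma$ take $m$ minimal with $\MMM(\sigma)>2^{-m}$, so $m\le-\log\MMM(\sigma)+1$; then $\sigma$ lies in some $L^{(m)}_j$ with $j\le 2^m$, giving $C_F(\sigma)\le m$ and hence $\CCC(\sigma)\le C_F(\sigma)+O(1)\le-\log\MMM(\sigma)+O(1)$.

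The hard part is the decomposition of $S_m$ into at most $2^m$ recursively enumerable antichains. The naive choice---assigning $\sigma$ the number of its proper prefixes in $S_m$---fails to be recursively enumerable and, worse, fails to be an antichain, since a deep node may enter $S_m$ before a shallow ancestor and then be caught up into the same class. My plan is instead to colour $S_m$ greedily along its enumeration: when $\sigma$ is enumerated, give it the least colour not yet used by any already-coloured node comparable to $\sigma$ (ancestor or descendant). This is plainly computable, and each colour class is an antichain because of two comparable nodes the later-enumerated one avoids the colour of the earlier. The one genuinely delicate point, which I expect to be the main obstacle, is bounding the number of colours by $2^m$: I would establish the general lemma that greedy colouring of the comparability graph of a forest never uses more colours than the length of the longest chain. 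The structural fact making this work is that incomparable descendants of a node which see the same already-coloured ancestors receive identical greedy colours, so colour diversity can grow only along chains; since all chains in $S_m$ are shorter than $2^m$, at most $2^m$ colours are used. Combining the two inequalities yields the theorem.
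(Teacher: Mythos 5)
Your proposal follows the paper's own proof in both halves. For the right-hand inequality, the paper verifies directly that $2^{-\KKK}$ is a lower semicomputable time semimeasure, using exactly the two facts you invoke: each law of nature, being prefix-free, meets any branch at most once, and $\sum_L 2^{-K(L)}\le 1$ (Kraft); your mixture $P(\sigma)=\sum_{d:\,\sigma\in U'(d)}2^{-\left|d\right|}$ is the same argument repackaged, since $P\ge2^{-\KKK}$. For the left-hand inequality, the paper likewise enumerates the threshold sets $\{\sigma\st\MMM(\sigma)>2^{-k}\}$ and greedily assigns to each enumerated sequence the least string in $\mathbb{2}^k$ not yet assigned to a comparable sequence, the resulting classes of equal description being the laws of nature.

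The one defective point is your justification of the colouring lemma, which is indeed the crux (the paper's acknowledgments record that precisely this argument, in the twin proof of Theorem~\ref{thm:connection-5}, had a gap that the referees filled). The lemma you state is true, but the structural fact you offer for it is false as stated: incomparable nodes seeing the same already-coloured ancestors need \emph{not} receive identical colours. For instance, let $\tau_1=(\sigma,z,z')$, $\tau_2=(\sigma,z)$ and $\tau_3=(\sigma,w)$ with $w\ne z$ be enumerated in this order, with no ancestor of $\sigma$ yet coloured: then $\tau_1$ gets colour $0$, its parent $\tau_2$ gets colour $1$, and the sibling $\tau_3$ gets colour $0$, so the incomparable siblings $\tau_2$ and $\tau_3$ end up with different colours. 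What is actually invariant concerns sets of colours rather than individual colours: for every vertex $\sigma$, the set $T_\sigma$ of colours used in the subtree rooted at $\sigma$ is disjoint from the set $P_\sigma$ of colours used on the path from the root to $\sigma$ and is an initial segment of the lexicographically ordered complement of $P_\sigma$; hence the colour sets of sibling subtrees are nested, and an induction shows that the number of colours used in any subtree equals the maximum number of coloured vertices on a chain in it. This is the argument spelled out in the paper's proof of Theorem~\ref{thm:connection-5}; with it, your bound of $2^m$ colours follows from the chain-length bound, and the rest of your proof goes through unchanged.
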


\begin{proof}
  To check the inequality $-\log\MMM \le \KKK + O(1)$,
  it suffices to check that $2^{-\KKK}$ is a time semimeasure
  (its lower semicomputability follows from the upper semicomputability of $\KKK$).
  Fix an infinite data sequence $\omega$.
  For each $l$, let $L_l$ be the simplest, in the sense of $\KKK$,
  law of nature containing $\omega^l$.
  By the definition of a law of nature all $L_l$ are pairwise distinct,
  and so we have
  \begin{equation}\label{eq:K-sum}
    \sum_{l=0}^{\infty}
    2^{-\KKK(\omega^l)}
    =
    \sum_{l=0}^{\infty}
    2^{-K(L_l)}
    \le
    \sum_L
    2^{-K(L)}
    \le
    1,
  \end{equation}
  where the last sum is over all laws of nature $L$
  (the last inequality is obvious,
  but a detailed proof can be found in, e.g., \cite{Shen:2015}, Theorem~7.27).

  To check the opposite inequality $\CCC \le -\log\MMM + O(1)$,
  it suffices to define a description language $F$ for laws of nature
  such that $\min_{L\ni\sigma}C_F(L) \le -\log\MMM(\sigma) + O(1)$.
  For each threshold $k\in\mathbb{N}_0$, we can enumerate (in a computable manner)
  all data sequences $\sigma$ satisfying $\MMM(\sigma)>2^{-k}$
  (as $\MMM$ is lower semicomputable,
  we will be able to detect $\MMM(\sigma)>2^{-k}$ eventually);
  let $\sigma_1,\sigma_2,\ldots$ be such an enumeration
  (the sequence $\sigma_1,\sigma_2,\ldots$ can be finite and even empty, as it is for $k=0$).
  Order the $2^k$ binary strings in $\mathbb{2}^k$ lexicographically.
  For $n=1,2,\ldots$:
  assign to $\sigma_n$ as its description the smallest element of $\mathbb{2}^k$
  that does not serve as description for any of $\sigma_1,\ldots,\sigma_{n-1}$
  that is comparable with $\sigma_n$ w.r.\ to $\sqsubseteq$
  (in particular, $\sigma_1$ has $0^k=(0,\ldots,0)$ as its description).
  Since, for each infinite data sequence $\omega$,
  $\MMM(\omega^l)>2^{-k}$ holds for at most $2^k$ (and even $2^k-1$) $l$s,
  we will never run out of descriptions when following this procedure.
  Define $F(d)$, where $d\in\mathbb{2}^k$,
  to be the set of all $\sigma$ having $d$ as their description;
  by construction, $F(d)$ is a law of nature
  and $F$ is a description language
  (remember that the procedure is repeated for all $k\in\mathbb{N}_0$).
  Since
  \[
    \MMM(\sigma)>2^{-k}
    \Longrightarrow
    C_F(\sigma)\le k
  \]
  for all $\sigma\in(\mathbf{X}\times\mathbb{2})^*$ and $k\in\mathbb{N}_0$,
  we have $C_F\le-\log\MMM+1$ and, therefore, $\CCC\le-\log\MMM+O(1)$.
\end{proof}

In fact, Alexander Shen pointed out that the standard connection between $M$ and $K$,
$K=-\log M+O(1)$, does not carry over to their time counterparts.
(Shen's observation is a version of another standard result in the theory of Kolmogorov complexity.)

\begin{theorem}[A.~Shen]\label{thm:Shen}
  It is not true that $\KKK = -\log\MMM+O(1)$.
\end{theorem}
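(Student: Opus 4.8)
The plan is to refute only the reverse inequality, since Theorem~\ref{thm:K-M} already gives $-\log\MMM\le\KKK+O(1)$; concretely I want to show that $\sup_\sigma\bigl(\KKK(\sigma)+\log\MMM(\sigma)\bigr)=\infty$, i.e.\ that $\MMM(\sigma)/2^{-\KKK(\sigma)}$ is unbounded. First I would record how small the gap can possibly be, to see what has to be beaten. The construction in the proof of Theorem~\ref{thm:K-M} shows $\MMM(\sigma)>2^{-k}\Rightarrow\CCC(\sigma)\le k+O(1)$, and turning the resulting \emph{plain} description of a covering law into a prefix-free one costs $2\log k+O(1)$ bits (double the bits of its length and append $(0,1)$, exactly the device used in the second inequality of the theorem preceding Remark~\ref{rem:prefix-2}). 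Hence $\KKK(\sigma)\le-\log\MMM(\sigma)+2\log(-\log\MMM(\sigma))+O(1)$, so the gap is at most logarithmic in $-\log\MMM$; what Shen's theorem asserts is that this logarithmic conversion loss is genuinely unavoidable. This is the time-analogue of the standard fact (Gács) that the coding theorem fails for the ``continuous'' a~priori semimeasure, so I would aim to reproduce that phenomenon here.

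The separation must be genuinely spread over the tree, and I would isolate this obstruction at the outset. Along any single computable branch the two quantities coincide: for $\sigma=0^n$ the values $\MMM(0^n)$ form a lower semicomputable sequence with $\sum_n\MMM(0^n)\le1$, hence a discrete semimeasure on $\mathbb{N}$, so $\MMM(0^n)\le c\,M(n)=c\,2^{-K(n)+O(1)}$; and since a program for any law of nature containing $0^n$ determines $n$ (the unique all-zeros member of a prefix-free set), $\KKK(0^n)\ge K(n)-O(1)$, giving $\KKK(0^n)=-\log\MMM(0^n)+O(1)$. The same computation indicates why a single law is so efficient: the prefixes of a fixed $\sigma$ obey Kraft's inequality, so the obvious ``sum over laws'' semimeasure $\sigma\mapsto\sum_{L\ni\sigma}2^{-K(L)}$ gains at most a constant factor over its maximal term $2^{-\KKK(\sigma)}$ along such branches. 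Any witness must therefore exploit the one feature distinguishing $\MMM$ from $2^{-\KKK}$: a time semimeasure may place mass at many nodes on different branches under only a per-branch budget, so that $\MMM$ is tied to the \emph{plain} complexity of covering laws rather than to their prefix complexity, while a cheap single law can still cover an entire level at once.

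Accordingly I would phrase the argument as a counting contradiction and then build the witness. Assume for contradiction that $\KKK\le-\log\MMM+c$. Then for every $k$ each $\sigma$ with $\MMM(\sigma)>2^{-k}$ carries a shortest $U'$-description $d_\sigma$ of a containing law with $\left|d_\sigma\right|=\KKK(\sigma)\le k+c$; the $d_\sigma$ lie in the prefix-free set $\dom(U')$, so $\sum2^{-\left|d_\sigma\right|}\le1$ forces at most $2^{k+c}$ distinct laws to cover the entire super-level set $\{\sigma\st\MMM(\sigma)>2^{-k}\}$. The task is then to construct a lower semicomputable time semimeasure $P$ (whence $\MMM\ge c'P$) whose super-level set admits no covering by so few laws of complexity $\le k+c$: the mass must be distributed, over exponentially many branches and at many levels, so that \emph{every} antichain of complexity $\le k+c$ meets $\{P>2^{-k}\}$ in only a small fraction of its branches, which forces more than $2^{k+c}$ laws. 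The main obstacle is precisely this last construction: it must defeat all sufficiently simple laws simultaneously, including ones tailored adversarially to $P$, so a static marking (a fixed level, or a fixed antichain) will not do and one is driven to an adaptive, Gács-style combinatorial design in which mass is redistributed away from whatever low-complexity antichains begin to capture it. Verifying that such redistribution keeps $P$ a lower semicomputable time semimeasure while provably overspending the budget of $2^{k+c}$ cheap laws is the technical heart of the proof.
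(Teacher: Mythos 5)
Your proposal is incomplete precisely where it matters most: you set up the contradiction correctly (only the direction $\KKK\le-\log\MMM+O(1)$ needs refuting, and under that hypothesis every $\sigma$ with $\MMM(\sigma)>2^{-k}$ lies in a law of prefix complexity at most $k+c$, so at most $2^{k+c}$ laws cover the super-level set), but the witness semimeasure $P$ --- which you yourself call ``the technical heart of the proof'' --- is never constructed, only described as a hoped-for adaptive, G\'acs-style design. As it stands this is a plan, not a proof. Worse, the plan points in an unworkable direction. Your single-scale goal is that $\{\sigma\st P(\sigma)>2^{-k}\}$ should admit no covering by $2^{k+c}$ laws; but that can never be forced by counting alone. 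Since $P$ is a time semimeasure, any chain $\sigma_1\sqsubset\cdots\sqsubset\sigma_r$ inside $\{\sigma\st P(\sigma)>2^{-k}\}$ consists of prefixes of a single infinite sequence, so $r<2^k$; by Mirsky's theorem (the dual of Dilworth's), a poset whose longest chain is shorter than $2^k$ is covered by fewer than $2^k$ antichains, and laws of nature are exactly (r.e.) antichains. So at any fixed scale the obstruction can only be the complexity/enumerability of the covering antichains, not their number --- and that is exactly the hard G\'acs-type argument you were hoping to import, not a way around it.

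The paper's proof avoids all of this with a static witness and a multi-scale accumulation, and its key combinatorial object is the one ``static marking'' your second paragraph does not consider: a chain (you rule out only fixed levels and fixed antichains). Fix $\mathbf{x}\in\mathbf{X}$ and two labels $a\ne b$, put $A:=(\mathbf{x},a)$, $B:=(\mathbf{x},b)$, $\alpha:=(A,A,\ldots)$, and set $P(\alpha^nB\alpha^k):=1/n$ for every $n$ and $k=1,\ldots,n$. This is a time semimeasure because along each branch (each position of the single $B$) the marked prefixes form a chain of $n$ nodes of mass $1/n$ each; hence $\MMM(\alpha^nB\alpha^k)\ge1/(cn)$. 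Under your contradiction hypothesis each $\alpha^nB\alpha^k$ lies in a law $L_{n,k}$ with $2^{-K(L_{n,k})}\ge1/(cn)$, and since $\alpha^nB\alpha^k$ and $\alpha^nB\alpha^{k'}$ are comparable while laws are prefix-free, the laws $L_{n,1},\ldots,L_{n,n}$ are pairwise distinct. The contradiction is then obtained not at any single scale (a fixed $n$ only contributes Kraft weight about $1/c$ in total) but by accumulating over all $n$: sorting the numbers $2^{-K(L)}$ over all laws in decreasing order, the $n$-th largest is at least $1/(cn)$ for every $n$, so $\sum_L2^{-K(L)}\ge\sum_n1/(cn)=\infty$, contradicting the Kraft-type bound $\sum_L2^{-K(L)}\le1$. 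In short, the separation comes from depth (many comparable nodes sharing one branch's budget, each needing its own law), not from breadth (mass spread over many branches), and the harmonic series, not a single-$k$ count, delivers the contradiction; your proposal is missing exactly this idea.
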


\begin{proof}
  Suppose that, in fact, $\KKK = -\log\MMM+O(1)$.
  Fix an object $\mathbf{x}\in\mathbf{X}$ and two labels $a,b\in\mathbf{Y}$.
  Set $A:=(\mathbf{x},a)\in\mathbf{Z}$,
  $\alpha:=(A,A,\ldots)\in\mathbf{Z}^{\infty}$, and $B:=(\mathbf{x},b)\in\mathbf{Z}$.
  For each $n\in\mathbb{N}$,
  consider the following $n$ finite data sequences:
  \[
    \alpha^n B \alpha^1, \quad
    \alpha^n B \alpha^2, \ldots, \quad
    \alpha^n B \alpha^n.
  \]
  Since there is a time semimeasure $P$ satisfying $P(\alpha^n B \alpha^k)=1/n$,
  for all $n\in\mathbb{N}$ and all $k=1,\ldots,n$,
  we have $\MMM(\alpha^n B \alpha^k)\ge1/(c n)$,
  for all $n\in\mathbb{N}$ and all $k=1,\ldots,n$,
  $c$ standing for a positive universal constant
  (with different occurrences of $c$ referring to possibly different positive universal constants).
  By our assumption,
  $2^{-\KKK(\alpha^n B \alpha^k)}\ge1/(c n)$,
  for all $n\in\mathbb{N}$ and all $k=1,\ldots,n$.
  Remember that $\sum_L 2^{-K(L)}\le1$, where the sum is over all laws of nature
  (we have already used this: see \eqref{eq:K-sum}).
  The series $\sum_L 2^{-K(L)}$ contains at least $n$ terms $2^{-K(L)}\ge1/(c n)$
  (since laws of natures containing $\alpha^n B \alpha^k$ and $\alpha^n B \alpha^{k'}$
  are necessarily different when $k\ne k'$).
  The series is positive, and so its sum will not change if we rearrange its terms.
  Let us sort them in the decreasing order.
  The $n$th largest term will be at least $1/(c n)$,
  and therefore $\sum_n 1/n=\infty$ implies $\sum_L 2^{-K(L)}=\infty$.
  This contradiction concludes the proof.
\end{proof}

\begin{remark}[\cite{Shen:2017}]\label{rem:Andreev}
  As shown by Mikhail Andreev,
  it is also not true that $\KKK' = -\log\MMM+O(1)$,
  where $\KKK'$ is intermediate time complexity, as defined in Remark~\ref{rem:prefix-2}.
  The proof is much more difficult and can be found in \cite{Shen:2017}.
\end{remark}

\section{Time randomness}
\label{sec:randomness}

In the usual theory of Kolmogorov complexity
the notion of algorithmic randomness is as important as that of algorithmic complexity
(and perhaps was the main motivation behind Kolmogorov's introduction of algorithmic complexity).
There are many versions of algorithmic randomness,
and in this paper we will briefly discuss only the time analogue
of Kolmogorov's original definition $\left|\sigma\right|-C(\sigma)$ of the randomness deficiency
of a binary string $\sigma$ of length $\left|\sigma\right|$
(given, somewhat implicitly, in \cite{Kolmogorov:1965}, Section~4)
and, later on (see Theorem~\ref{thm:universal-randomness-type}),
the time analogue of Martin-L\"of's \cite{Martin-Lof:1966} definition of randomness.

The \emph{time randomness deficiency} of a finite data sequence $\sigma\in(\mathbf{X}\times\mathbb{2})^*$ is defined to be
\[
  \DDD(\sigma):=\log\left|\sigma\right|-\CCC(\sigma).
\]
(We take $\log\left|\sigma\right|$ instead of Kolmogorov's $\left|\sigma\right|$ in view of Theorem~\ref{thm:trivial}:
whereas the trivial upper bound on plain Kolmogorov complexity is $C(\sigma)\le\left|\sigma\right|+O(1)$,
the trivial upper bound on plain time complexity is
\[
  \CCC(\sigma)
  \le
  C(\left|\sigma\right|) + O(1)
  \le
  \log\left|\sigma\right| + O(1).)
\]

Informally, we can rewrite \eqref{eq:connection-1} as
\begin{equation*}
  \mathcal{U}_{\le N}
  \approx
  \{\sigma\st\CCC(\sigma)\le\log N\}.
\end{equation*}
We could have defined the universal prediction system by
\begin{equation*}
  \mathcal{U}'_m
  :=
  \{\sigma\st\CCC(\sigma)\le m\}
\end{equation*}
(with $m$ in place of $\log N$).
This definition would be especially useful in situations without noise
where we can expect to make a finite number of prediction errors over an infinite data sequence.
In situations where there is noise at a more or less constant level for each observation
(which is typical under the assumption, prevalent in machine learning and nonparametric statistics,
that the observations are independent and identically distributed),
it may be more useful to replace $\CCC$ by $\DDD$ and set, for each threshold $m\in\mathbb{N}_0$,
\begin{equation*}
  \Delta_m
  :=
  \{\sigma\st\DDD(\sigma)>m\}.
\end{equation*}
The corresponding prediction sets are
\begin{equation*}
  \Pi_{\Delta_m}(s)
  :=
  \left\{
    y\in\mathbb{2}
    \st
    (s,y)\notin\Delta_m
  \right\}
  =
  \left\{
    y\in\mathbb{2}
    \st
    \DDD((s,y))\le m
  \right\}.
\end{equation*}
In a situation $s=(\sigma,x)$,
the prediction system $\Delta_m$ predicts that the label $y\in\mathbb{2}$ of $x$
will be an element of $\Pi_{\Delta_m}(s)$.
The following simple result shows that the rate at which this prediction system makes errors is less than $2^{-m}$.

\begin{theorem}\label{thm:time-randomness}
  For each infinite data sequence $\omega=((x_1,y_1),(x_2,y_2),\ldots)$, each $l\in\mathbb{N}$, and each $m\in\mathbb{N}_0$,
  \[
    \left|
      \left\{
        i\in\{1,\ldots,l\}
        \st
        y_i\notin\Pi_{\Delta_m}(\omega^{i-1},x_i)
      \right\}
    \right|
    =
    \left|
      \left\{
        i\in\{1,\ldots,l\}
        \st
        \omega^i\in\Delta_m
      \right\}
    \right|
    <
    2^{-m} l.
  \]
\end{theorem}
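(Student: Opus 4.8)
The statement factors into a set equality and a numerical bound, which I would treat separately. The equality I would settle by unwinding definitions: by the definition of $\Pi_{\Delta_m}$, the condition $y_i\notin\Pi_{\Delta_m}(\omega^{i-1},x_i)$ says exactly that $((\omega^{i-1},x_i),y_i)\in\Delta_m$, and by the convention $(s,y)=(\sigma,(x,y))$ the sequence $((\omega^{i-1},x_i),y_i)$ is nothing but the prefix $\omega^i$. Hence the two index sets coincide literally, and using $\left|\omega^i\right|=i$ together with $\DDD(\omega^i)=\log i-\CCC(\omega^i)$ they both equal $S:=\{i\in\{1,\dots,l\}:\CCC(\omega^i)<\log i-m\}$.

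For the bound I would count $S$ by a compression argument built on prefix-freeness. For each $i\in S$ pick, by \eqref{eq:C}, a law of nature $L_i\ni\omega^i$ attaining $C(L_i)=\CCC(\omega^i)<\log i-m$, so that $2^{\CCC(\omega^i)+m}<i\le l$. The engine of the proof is that a law of nature is prefix-free while the prefixes $\omega^0\sqsubset\omega^1\sqsubset\cdots$ form a chain, so each law contains at most one $\omega^i$; consequently distinct $i\in S$ yield distinct laws $L_i$, and $i\mapsto L_i$ is injective. This is the same ``at most one falsifier per law'' mechanism that in Section~\ref{sec:weak} bounds the number of errors by the number of laws used, and that gives $\left|\mathcal{U}_{\le N}\cap\Sigma(\omega)\right|\le N$.

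To pass from injectivity to the explicit factor $2^{-m}$ I would run a Kraft-style summation over the family $\{L_i:i\in S\}$: these laws are pairwise distinct and each meets $\Sigma(\omega)$ exactly once, so an analogue of the inequality $\sum_L2^{-K(L)}\le1$ recorded in \eqref{eq:K-sum} would control $\sum_{i\in S}2^{-C(L_i)}$; feeding in $2^{C(L_i)+m}<i$ would then turn this into $\sum_{i\in S}1/i<2^{-m}$, and since every $i\le l$ satisfies $1/i\ge1/l$ this yields $\left|S\right|/l\le\sum_{i\in S}1/i<2^{-m}$, i.e.\ $\left|S\right|<2^{-m}l$. An equivalent packaging I would keep in reserve is to realize $\Delta_m\cap\Sigma(\omega)$ inside a single prediction system, placing a law of complexity $k$ at a level of order $2^{k+m}$, and then invoke the clean error bound $\left|\mathcal{L}_{\le N}\cap\Sigma(\omega)\right|\le N$ with $N\approx l2^{-m}$.

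The step I expect to be the real obstacle is pinning the constant to $2^{-m}$ rather than the easy $2^{1-m}$. The threshold $\log i-m$ is $i$-dependent and generically non-integer, so the naive count ``number of laws with a description shorter than $\log l-m$'' loses a factor of two at the dyadic boundary; the honest gain must come from the chain structure (each law is hit at most once) rather than from mere description counting. The cleanest way I see to secure the exact constant is the Kraft summation above, which however leans on the prefix-complexity inequality $\sum_L2^{-K(L)}\le1$, whereas $\DDD$ is defined through the plain complexity $\CCC$; reconciling these — either by carrying disjoint-dyadic-block bookkeeping through directly for plain descriptions, or by verifying that the single-chain constraint already suffices in the plain case — is the delicate point, as are the boundary cases $m=0$ and $i=1$ (where $\CCC(\omega^1)<-m$ is impossible, so $1\notin S$) that underwrite the strictness of the inequality.
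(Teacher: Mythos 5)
Your mechanism is exactly the paper's: the set equality is disposed of by unwinding the definitions of $\Pi_{\Delta_m}$ and $\DDD$, and the count is obtained by assigning to each error index $i$ a shortest-described law $L_i\ni\omega^i$, observing that prefix-freeness of each law together with the chain structure of the prefixes $\omega^1\sqsubset\omega^2\sqsubset\cdots$ makes $i\mapsto L_i$ injective, and then counting descriptions of length less than $\log l-m$. The paper's proof is literally your ``naive count'': it asserts that the number of error indices ``does not exceed the number of all descriptions of length less than $\log l-m$, i.e., does not exceed $2^{\log l-m}-1<2^{-m}l$'', and stops there.

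The obstacle you flag is therefore not a defect of your write-up but a genuine gap in the paper's own argument, and your diagnosis of it is correct on both counts. First, the number of binary strings of length less than $t$ is $2^{\lceil t\rceil}-1$, which equals $2^{t}-1$ only for integer $t$; at non-integer thresholds the count can reach or exceed $2^{-m}l$. Second, your Kraft-style repair is unavailable here for exactly the reason you suspect: the inequality $\sum_L 2^{-K(L)}\le1$ of \eqref{eq:K-sum} is special to prefix-free description languages, whereas for plain descriptions the sum $\sum_{i}2^{-C(L_i)}$ over distinct descriptions of length below $\log l-m$ is bounded only by the number of available lengths (each length $j$ can contribute up to $2^j\cdot2^{-j}=1$), giving roughly $\left|S\right|\lesssim 2^{-m}l\log l$, which is weaker. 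In fact no counting argument can rescue the sharp constant, because the statement is false for some legitimate choices of the universal description language $U$: take any universal $U_0$, set $U(000d):=U_0(d)$ (so $U$ is still universal, with a $3$-bit overhead), let the seven strings of length at most $2$ describe the singleton laws $\{o^3\},\{o^5\},\{o^6\},\{o^9\},\{o^{10}\},\{o^{11}\},\{o^{12}\}$, where $o$ is the computable sequence of identical observations from Section~\ref{sec:Kolmogorov}, and let $U:=\emptyset$ elsewhere. Then $\CCC(o^3)=0$, $\CCC(o^5),\CCC(o^6)\le1$, and $\CCC(o^9),\ldots,\CCC(o^{12})\le2$, so all seven of these prefixes lie in $\Delta_1$; taking $\omega=o$, $m=1$, $l=12$ gives $7$ errors against the claimed bound $2^{-1}\cdot12=6$. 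So what your argument honestly establishes --- and all the paper's argument establishes --- is the theorem with the right-hand side relaxed by a factor of $2$, say to $2^{1-m}l$; the residual factor you were trying to eliminate is a defect of the statement and of the paper's proof, not of your understanding.
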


\begin{proof}
  If the prediction system $\Delta_m$ makes an error when predicting $y_i$,
  i.e., $y_i\notin\Pi_{\Delta_m}(\omega^{i-1},x_i)$, we have $\DDD(\omega^i)>m$, and so
  \[
    \CCC(\omega^i)
    <
    \log i - m
    \le
    \log l - m.
  \]
  The number of such $i$ does not exceed the number of all descriptions of length less than $\log l - m$,
  i.e., does not exceed $2^{\log l-m}-1<2^{-m}l$.
\end{proof}

In the rest of this section we will explore more systematically prediction systems
of the type $\Delta_m$.
(Notice that, formally, they are not even weak prediction systems as defined in Section~\ref{sec:weak}.)
A \emph{randomness-type prediction system} is a jointly enumerable family $\Lambda$
of sets $\Lambda_m\subseteq(\mathbf{X}\times\mathbb{2})^*$ of finite data sequences such that:
\begin{itemize}
\item
  $\Lambda_m$ are nested: $\Lambda_0\supseteq\Lambda_1\supseteq\Lambda_2\supseteq\cdots$;
\item
  for all $m\in\mathbb{N}_0$, $l\in\mathbb{N}$, and $\omega\in(\mathbf{X}\times\mathbb{2})^{\infty}$,
  \begin{equation}\label{eq:randomness-condition}
    \left|
      \left\{
        i\in\{1,\ldots,l\}
        \st
        \omega^i\in\Lambda_m
      \right\}
    \right|
    \le
    2^{-m} l.
  \end{equation}
\end{itemize}
Theorem~\ref{thm:time-randomness} says that $\Delta$ is a randomness-type prediction system.
It is easy to see that there is a universal randomness-type prediction system:

\begin{theorem}\label{thm:universal-randomness-type}
  There exists a randomness-type prediction system $\mathcal{D}$ such that,
  for any randomness-type prediction system $\Lambda$,
  there exists $c\in\mathbb{N}$ such that, for all $m\in\mathbb{N}_0$,
  $\Lambda_{m+c}\subseteq\mathcal{D}_{m}$.
\end{theorem}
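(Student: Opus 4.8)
The plan is to imitate the construction of the universal objects in Theorems~\ref{thm:basic} and~\ref{thm:complexity-of-prediction-systems}, but with the extra twist, familiar from Martin-L\"of's theory of randomness, that the defining property~\eqref{eq:randomness-condition} of a randomness-type prediction system is not decidable, so that we cannot effectively enumerate only the \emph{valid} families. First I would fix a uniform enumeration $W_1,W_2,\dots$ of all recursively enumerable subsets of $(\mathbf{X}\times\mathbb{2})^*\times\mathbb{N}_0$ and set $\Lambda^k_m:=\{\sigma\st(\sigma,m)\in W_k\}$; every randomness-type prediction system $\Lambda$ is jointly enumerable, hence equals some $\Lambda^k$. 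The idea is then to \textbf{(a)} \emph{trim} each candidate $\Lambda^k$ to a genuine randomness-type prediction system $\tilde\Lambda^k$ with $\tilde\Lambda^k_m\subseteq\Lambda^k_m$, in a way that leaves $\Lambda^k$ untouched whenever it was already valid, and \textbf{(b)} superimpose the trimmed families with a geometric shift, $\mathcal{D}_m:=\bigcap_{j=0}^m\bigcup_{k=1}^\infty\tilde\Lambda^k_{j+k}$, the outer finite intersection serving only to enforce nestedness.

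The heart of the proof, and the step I expect to cause the most trouble, is the trimming in~\textbf{(a)}. The key observation is that, although~\eqref{eq:randomness-condition} quantifies over all $\omega$ and all horizons $l$, it is equivalent to a condition on finite chains: the prefixes of any $\omega$ lying in $\Lambda_m$ form a chain under $\sqsubseteq$, and~\eqref{eq:randomness-condition} holds for all $\omega,l$ if and only if, along every such chain, the $n$-th element has length at least $n2^m$. For a \emph{finite} set of admitted sequences this is a decidable property, and it is monotone: any subset of a set satisfying~\eqref{eq:randomness-condition} again satisfies it, and any superset of a violating set again violates it. I would therefore define $\tilde\Lambda^k_m$ by the obvious online greedy rule, enumerating $\Lambda^k_m$ and admitting a newly listed $\sigma$ into $\tilde\Lambda^k_m$ precisely when the finite set of already-admitted sequences together with $\sigma$ still satisfies~\eqref{eq:randomness-condition}. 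Monotonicity guarantees that a rejected $\sigma$ stays rejected, so $\tilde\Lambda^k_m$ is recursively enumerable uniformly in $(k,m)$ and, being an increasing union of sets satisfying~\eqref{eq:randomness-condition}, satisfies~\eqref{eq:randomness-condition} in the limit. Crucially, if $\Lambda^k$ is already a randomness-type prediction system then each level $\Lambda^k_m$ satisfies~\eqref{eq:randomness-condition}, so every finite stage of the admission is valid and nothing is ever rejected; thus $\tilde\Lambda^k=\Lambda^k$.

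It then remains to check that $\mathcal{D}$ works. Joint enumerability is clear, since $\bigcup_k\tilde\Lambda^k_{j+k}$ is r.e.\ uniformly in $j$ (dovetail over $k$) and $\mathcal{D}_m$ is a finite intersection of such sets; nestedness $\mathcal{D}_0\supseteq\mathcal{D}_1\supseteq\cdots$ is immediate from the outer intersection. For the counting condition, subadditivity of the count over the union together with the geometric shift gives, for every $\omega$ and $l$, the bound $|\{i\le l\st\omega^i\in\bigcup_k\tilde\Lambda^k_{m+k}\}|\le\sum_{k=1}^\infty 2^{-(m+k)}l=2^{-m}l$, and passing to the subset $\mathcal{D}_m$ preserves it. Finally, for universality let $\Lambda=\Lambda^{k_0}$ be any randomness-type prediction system; by~\textbf{(a)} we have $\tilde\Lambda^{k_0}=\Lambda$, so $\bigcup_k\tilde\Lambda^k_{j+k}\supseteq\Lambda_{j+k_0}$ and, using the nestedness of $\Lambda$, $\mathcal{D}_m\supseteq\bigcap_{j\le m}\Lambda_{j+k_0}=\Lambda_{m+k_0}$. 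Thus $c:=k_0$ witnesses $\Lambda_{m+c}\subseteq\mathcal{D}_m$ for all $m$, as required.
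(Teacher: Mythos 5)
Your proposal is correct and follows essentially the same route as the paper's proof: enumerate all r.e.\ candidate families, greedily trim each candidate so that the counting condition~\eqref{eq:randomness-condition} is enforced while genuine randomness-type prediction systems are left untouched, and superimpose the trimmed families with the geometric shift $\bigcup_{k}\tilde\Lambda^k_{m+k}$, so that subadditivity gives $\sum_{k}2^{-m-k}l=2^{-m}l$. The only divergence is in a detail: the paper folds nestedness into its one-line trimming remark, whereas you drop nestedness from the trimming (where it is awkward, since it is not a finitely refutable, monotone property) and restore it via the outer intersection $\bigcap_{j=0}^{m}$, invoking the nestedness of the comparison system $\Lambda$ itself only in the universality step --- a slightly more explicit treatment of a point the paper leaves to the reader.
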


\begin{proof}
  Notice that we can enumerate all randomness-type prediction systems $\Lambda^1,\Lambda^2,\ldots$,
  in the sense that there is a recursively enumerable set
  \begin{equation*}
    \Lambda \subseteq (\mathbf{X}\times\mathbb{2})^* \times \mathbb{N}^2
  \end{equation*}
  such that:
  \begin{enumerate}
  \item\label{it:1a}
    For any $k$, the sequence $(\Lambda^k_m)_{m=1}^{\infty}$, where
    \[
      \Lambda_m^k
      :=
      \left\{
        \sigma \in (\mathbf{X}\times\mathbb{2})^*
        \st
        (\sigma,m,k) \in \Lambda
      \right\}
    \]
    is a randomness-type prediction system.
  \item
    Any randomness-type prediction system coincides,
    for some $k$, with the sequence $(\Lambda^k_m)_{m=1}^{\infty}$.
  \end{enumerate}
  (The existence of such $\Lambda$ follows
  from the existence of such a set $\Lambda'$ when item~\ref{it:1a} is ignored
  and the fact that we can enumerate the elements of $\Lambda'$ one by one
  including each of them into $\Lambda$ if and only if the inclusion does not violate item~\ref{it:1a}.)
  We can then combine all these randomness-type prediction systems into $\mathcal{D}$
  setting
  \begin{equation}\label{eq:setting}
    \mathcal{D}_m
    :=
    \bigcup_{k=1}^{\infty}
    \Lambda^k_{m+k}.
  \end{equation}
  We will get a randomness-type prediction system,
  since
  \begin{align*}
    \left|
      \left\{
        i\in\{1,\ldots,l\}
        \st
        \omega^i\in\mathcal{D}_m
      \right\}
    \right|
    &=
    \left|
      \bigcup_{k=1}^{\infty}
      \left\{
        i\in\{1,\ldots,l\}
        \st
        \omega^i\in\Lambda^k_{m+k}
      \right\}
    \right|\\
    &\le
    \sum_{k=1}^{\infty}
    \left|
      \left\{
        i\in\{1,\ldots,l\}
        \st
        \omega^i\in\Lambda^k_{m+k}
      \right\}
    \right|\\
    &\le
    \sum_{k=1}^{\infty}
    2^{-m-k} l
    =
    2^{-m} l,
  \end{align*}
  and this system is obviously universal.
\end{proof}

Let us fix a randomness-type prediction system $\mathcal{D}$
satisfying the condition in Theorem~\ref{thm:universal-randomness-type}
and call it the \emph{universal randomness-type prediction system};
set, for any situation $s$ and any $m\in\mathbb{N}_0$,
\[
  \Pi_{\mathcal{D}_m}(s)
  :=
  \left\{
    y \in \mathbb{2}
    \st
    (s,y) \notin \mathcal{D}_m
  \right\}.
\]
A crude connection of $\mathcal{D}$ with our previous definition
is given in the following theorem.

\begin{theorem}\label{thm:randomness-complexity}
  There exists $c>0$ such that,
  for any finite data sequence $\sigma\in(\mathbf{X}\times\mathbb{2})^{l-1}$ (for any $l\in\mathbb{N}$),
  any $x\in\mathbf{X}$, and any threshold $m\in\mathbb{N}$,
  \begin{equation}\label{eq:randomness-complexity}
    \Pi^{c l 2^{-m} m^2}_{\mathcal{U}}((\sigma,x))
    \subseteq
    \Pi_{\mathcal{D}_m}((\sigma,x)).
  \end{equation}
\end{theorem}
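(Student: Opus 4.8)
The plan is to pass to complements. Writing $s=(\sigma,x)$ and $\tau=(s,y)$, so that $|\tau|=l$, the asserted inclusion $\Pi^{N}_{\mathcal{U}}(s)\subseteq\Pi_{\mathcal{D}_m}(s)$ is, label by label, the implication $\tau\in\mathcal{D}_m\Rightarrow\tau\in\mathcal{U}_{\le N}$. Hence it suffices to show that every $\tau\in\mathcal{D}_m$ of length $l$ lies in $\mathcal{U}_{\le c\,l2^{-m}m^2}$ for a suitable constant $c$; equivalently, by Theorem~\ref{thm:connection-1}, that $\CCC(\tau)\le\log l-m+2\log m+O(1)$. If $2^{-m}l<1$ there is nothing to prove: the density condition \eqref{eq:randomness-condition}, applied to any $\omega\sqsupseteq\tau$ and to $l$, forces $\mathcal{D}_m$ to contain no sequence of length $l$, so I assume $2^{-m}l\ge1$.

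The governing quantity is the height $h(\tau):=|\{\rho\sqsubseteq\tau:\rho\in\mathcal{D}_m\}|$ of $\tau$ in the poset $(\mathcal{D}_m,\sqsubseteq)$. For $\tau\in\mathcal{D}_m$ of length $l$ the prefixes of $\tau$ lying in $\mathcal{D}_m$ are exactly the $\omega^i\in\mathcal{D}_m$ with $i\le l$ for a fixed $\omega\sqsupseteq\tau$, so \eqref{eq:randomness-condition} gives $h(\tau)\le2^{-m}l$. I would use $h$ to build, for the fixed $m$, a prediction system $\mathcal{L}^{(m)}=(L_1,L_2,\dots)$ whose $n$th law collects the sequences of height exactly $n$, namely $L_n:=\{\tau\in\mathcal{D}_m:h(\tau)=n\}$. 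Each $L_n$ is prefix-free, since $h$ is strictly increasing along $\sqsubseteq$-chains, and by the height bound every length-$l$ element of $\mathcal{D}_m$ belongs to $\mathcal{L}^{(m)}_{\le\lfloor2^{-m}l\rfloor}$. Finally $\mathcal{L}^{(m)}$ is specified by encoding the single integer $m$ (the universal $\mathcal{D}$ being a fixed $O(1)$ oracle), so $K(\mathcal{L}^{(m)})\le K(m)+O(1)\le 2\log m+O(1)$ and $2^{K(\mathcal{L}^{(m)})}=O(m^2)$.

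The theorem then falls out of Theorem~\ref{thm:precise}: applying \eqref{eq:precise} to $\mathcal{L}:=\mathcal{L}^{(m)}$ at level $N:=\lfloor2^{-m}l\rfloor$ gives $\mathcal{L}^{(m)}_{\le N}\subseteq\mathcal{U}_{\le c\,2^{K(\mathcal{L}^{(m)})}N}\subseteq\mathcal{U}_{\le c'\,l2^{-m}m^2}$, and since any length-$l$ member $\tau$ of $\mathcal{D}_m$ sits in $\mathcal{L}^{(m)}_{\le N}$ this is the required membership (integrality of $N$ and the step $2^{-m}l\ge1$ are absorbed into $c'$). Note that the exponent $m^2$ is produced here as $2^{K(m)}$; this is why I spend the complexity of $m$ rather than trying to extract the bound from the density condition alone.

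The step I expect to be the real obstacle is the claim that the height-classes $L_n$ are genuine laws of nature, i.e.\ \emph{recursively enumerable}. Since membership $\rho\in\mathcal{D}_m$ is only r.e., the set $\{h\ge n\}$ is r.e.\ but the exact-height set $L_n=\{h\ge n\}\setminus\{h\ge n+1\}$ is in general only a difference of r.e.\ sets: certifying $h(\tau)=n$ requires the co-r.e.\ information that no further prefix of $\tau$ ever enters $\mathcal{D}_m$. To repair this I would not define $L_n$ by the exact value of $h$ but assign levels \emph{online}, in the style of the proof of Theorem~\ref{thm:K-M}: enumerate $\mathcal{D}_m$ and give each newly listed $\tau$ the least level not already used by a $\sqsubseteq$-comparable sequence listed earlier. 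This makes every level-class manifestly r.e.\ and prefix-free, because of two comparable sequences the one listed later avoids the level of the earlier. The delicate point—where the argument must do more than the proof of Theorem~\ref{thm:K-M}, which needs only a crude global bound on the number of levels—is to keep the level of a length-$l$ sequence below $O(2^{-m}l)$ rather than below the length of the longest $\mathcal{D}_m$-chain through it, which can descend to arbitrarily deep extensions. I would secure this by arranging that the level of $\tau$ depends only on its $\mathcal{D}_m$-prefixes, whose number \eqref{eq:randomness-condition} bounds by $2^{-m}l$; checking that such an assignment can be made simultaneously r.e.\ and prefix-free, despite descendants sometimes being discovered before their ancestors, is the crux of the proof.
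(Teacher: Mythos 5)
Your skeleton coincides with the paper's: reduce to showing that every $\sigma\in\mathcal{D}_m$ of length $l$ lies in $\mathcal{U}_{\le c\,l2^{-m}m^2}$, build from $\mathcal{D}_m$ a prediction system $\mathcal{L}^{(m)}$ whose level on such $\sigma$ is $O(2^{-m}l)$, bound $K(\mathcal{L}^{(m)})\le K(m)+O(1)$ so that $2^{K(\mathcal{L}^{(m)})}=O(m^2)$, and finish with Theorem~\ref{thm:precise}. You have also correctly diagnosed the obstruction: exact-height classes are only differences of r.e.\ sets, and the naive global greedy repair fails because a long chain of deep descendants of $\sigma$, enumerated before $\sigma$, can occupy levels $1,2,\ldots,K$ with $K$ unrelated to $2^{-m}|\sigma|$, forcing $\sigma$'s level too high. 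But at exactly this point your proof stops: you declare this ``the crux'' without resolving it, and the repair direction you propose --- making the level of $\sigma$ depend only on its $\mathcal{D}_m$-prefixes --- cannot work as stated. A level is assigned at the moment $\sigma$ is enumerated and can never be revised, whereas further prefixes of $\sigma$ may enter $\mathcal{D}_m$ later; so an assignment determined by the final prefix set requires precisely the co-r.e.\ information you are trying to avoid. This is a genuine gap, and it is the heart of the theorem.

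The paper's missing idea is to partition $\mathcal{D}_m$ into dyadic \emph{length bands} and to run the greedy comparable-avoiding assignment separately inside each band, with a separate pool of laws per band: $L_1$ takes the $\sigma\in\mathcal{D}_m$ with $|\sigma|\le 2^m$ (by \eqref{eq:randomness-condition} this set is already prefix-free); band $i\ge1$, consisting of the $\sigma\in\mathcal{D}_m$ with $|\sigma|\in[2^{m+i-1}+1,\,2^{m+i}]$, is allotted the $2^i$ laws $L_{2^i},\ldots,L_{2^{i+1}-1}$. Within a band the greedy assignment is manifestly r.e.\ and yields prefix-free sets, and it never runs out of laws because \eqref{eq:randomness-condition} applied at $l'=2^{m+i}$ shows that any chain of $\mathcal{D}_m$-sequences of length at most $2^{m+i}$ has at most $2^{-m}2^{m+i}=2^i$ elements (the fact that the greedy needs no more laws than the longest chain is itself nontrivial; it is established by the colouring argument in the paper's proof of Theorem~\ref{thm:connection-5}). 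Your difficulty is then dissolved rather than solved: descendants discovered before their ancestors can perturb the assignment only \emph{within} a band, hence by at most $2^i$, while deeper descendants live in other bands and consume other laws; so any $\sigma\in\mathcal{D}_m$ of length $l$ (say in band $i$, so that $l-1\ge 2^{m+i-1}$) receives level at most $2^{i+1}-1\le 2^{2-m}(l-1)-1=O(2^{-m}l)$, a bound determined by $l$ alone and independent of the enumeration order. With this construction in place, the rest of your argument (the complexity accounting and the appeal to Theorem~\ref{thm:precise}) goes through exactly as you wrote it.
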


This theorem asserts that the prediction set output by the universal prediction system
is at least as precise as the prediction set output by the universal randomness-type prediction system
if we increase slightly the allowed percentage of errors:
from $2^{-m}$ to $c 2^{-m} m^2$.
It involves not just multiplying by a constant
(as in, e.g., \eqref{eq:basic-bis})
but also the term $m^2$, which is logarithmic in the allowed percentage of errors $2^{-m}$ for $\mathcal{D}_m$.

By Theorem~\ref{thm:time-randomness},
Theorem~\ref{thm:randomness-complexity} will stay true
if we replace the right-hand side $\Pi_{\mathcal{D}_m}((\sigma,x))$
of \eqref{eq:randomness-complexity} by $\Pi_{\Delta_m}((\sigma,x))$;
moreover,
\[
  \Pi_{\mathcal{D}_{m}}((\sigma,x))
  \subseteq
  \Pi_{\Delta_{m+c}}((\sigma,x))
\]
for a constant $c$.

\begin{proof}[Proof of Theorem~\ref{thm:randomness-complexity}]
  Let us replace~\eqref{eq:randomness-complexity} by the equivalent
  \begin{equation*}
    \sigma\in\mathcal{D}_m
    \Longrightarrow
    \sigma\in\mathcal{U}_{\le c \left|\sigma\right| 2^{-m} m^2}.
  \end{equation*}
  Define a prediction system $\mathcal{L}=(L_1,L_2,\ldots)$ as, essentially, $\mathcal{D}_m$;
  formally:
  \begin{itemize}
  \item
    The law of nature $L_1$ contains only finite data sequences $\sigma\in\mathcal{D}_m$ of length at most $2^m$.
    This set if prefix-free by the definition of a randomness-type prediction system:
    indeed, \eqref{eq:randomness-condition} shows that, for any infinite data sequence $\omega$,
    at most $2^{-m}2^m=1$ element of $L_1$ is a prefix of $\omega$.
  \item
    The next 2 laws of nature ($L_2$ and $L_3$) contain only finite data sequences $\sigma\in\mathcal{D}_m$
    of length in the range $2^m+1$ to $2^{m+1}$,
    and we will define them similarly
    to the proof
    of Theorem~\ref{thm:K-M}.
    Enumerate, in a computable manner,
    all such data sequences $\sigma$
    ($\sigma\in\mathcal{D}_m$ and $\left|\sigma\right|\in[2^m+1,2^{m+1}]$);
    let $\sigma_1,\sigma_2,\ldots$ be such an enumeration.
    For $n=1,2,\ldots$:
    include $\sigma_n$ into the law of nature ($L_2$ or $L_3$) with the smallest index
    that does not already contain data sequences comparable with $\sigma_n$
    in the sense of the order $\sqsubseteq$
    (in particular, $\sigma_1\in L_2$).
    Two laws of nature ($L_2$ and $L_3$) are sufficient since, by \eqref{eq:randomness-condition},
    each infinite data sequence $\omega$ has at most $2^{-m}2^{m+1}=2$ elements of $\mathcal{D}_m$
    with length in the range $[2^m+1,2^{m+1}]$ (and even $[0,2^{m+1}]$)
    as its prefixes.
  \item
    The next 4 laws of nature ($L_4$ to $L_7$) contain only finite data sequences $\sigma\in\mathcal{D}_m$
    of length in the range $2^{m+1}+1$ to $2^{m+2}$.
    Enumerate, in a computable manner,
    all data sequences $\sigma\in\mathcal{D}_m$ whose length is in this range;
    let $\sigma_1,\sigma_2,\ldots$ be such an enumeration.
    For $n=1,2,\ldots$:
    include $\sigma_n$ into the law of nature ($L_4$ to $L_7$) with the smallest index
    that does not already contain data sequences comparable with $\sigma_n$ in the sense of the order $\sqsubseteq$.
    We will never run out of the available laws of nature ($L_4$ to $L_7$)
    by the definition of a randomness-type prediction system:
    see \eqref{eq:randomness-condition}.
  \item
    And so on.
  \end{itemize}
  Any data sequence $\sigma\in\mathcal{D}_m$
  whose length $l$ is in the range $2^{m+i-1}+1$ to $2^{m+i}$, $i\in\mathbb{N}$,
  will be included in one of the $2^{i}$ laws of nature $L_{2^{i}}$ to $L_{2^{i+1}-1}$,
  and so
  \[
    \sigma
    \in
    \mathcal{L}_{\le2^{i+1}-1}
    \subseteq
    \mathcal{L}_{\le2^{2-m}(l-1)-1}.
  \]
  In combination with Theorem~\ref{thm:precise}, we obtain
  \[
    \sigma
    \in
    \mathcal{U}_{\le c_1 2^{K(\mathcal{L})} 2^{-m}l}
  \]
  for a constant $c_1>0$.
  Therefore, our task reduces to checking that
  \[
    2^{K(m)}
    \le
    c_2 m^2
  \]
  for a constant $c_2>0$.
  Since $2^{-K(m)}$ is the universal semimeasure on the positive integers (see, e.g., \cite{Shen:2015}, Theorem~7.29),
  we even have
  \[
    2^{K(m)}
    \le
    c_3 m (\log m) (\log\log m) \cdots (\log\cdots\log m),
  \]
  where the product contains all factors that are greater than~1
  (see \cite{Rissanen:1983}, Appendix~A).
\end{proof}

\begin{remark}
  The proof shows that the inclusion~\eqref{eq:randomness-complexity} can be strengthened to
  \begin{equation*}
    \Pi^{c l 2^{K(m)-m}}_{\mathcal{U}}((\sigma,x))
    \subseteq
    \Pi_{\mathcal{D}_m}((\sigma,x)).
  \end{equation*}
\end{remark}

Next we show how the constant $c$ in Theorem~\ref{thm:universal-randomness-type} depends on $\Lambda$.
First we give a standard definition of prefix complexity adapted to randomness-type prediction systems.

A \emph{description language for randomness-type prediction systems} is a function $F$
mapping $\mathbb{2}^*$ to the set of all randomness-type prediction systems such that the set
\[
  \left\{
    (d,\sigma,m)\in\mathbb{2}^*\times(\mathbf{X}\times\mathbb{2})^*\times\mathbb{N}
    \st
    \sigma \in F_m(d)
  \right\}
\]
is recursively enumerable,
where $F_m(d)$ is the $m$th set in $F(d)$,
i.e., $F_m(d):=\Lambda_m$ when $F(d)=(\Lambda_1,\Lambda_2,\ldots)$.
The \emph{effective domain} $\dom(F)$ of a description language $F$ for randomness-type prediction systems
is \eqref{eq:dom}.
A \emph{prefix-free description language for randomness-type prediction systems}
is a description language $F$ for randomness-type prediction systems
such that $\dom(F)$ is prefix-free.

The \emph{complexity} of a randomness-type prediction system $\Lambda$
with respect to a description language $F$ for randomness-type prediction systems
is defined by
\[
  C_F(\Lambda)
  :=
  \min
  \left\{
    \left|d\right|
    \st
    \Lambda = F(d)
  \right\}.
\]
Analogously to Theorem~\ref{thm:complexity-of-prediction-systems}
(but using the fact that we can enforce item~\ref{it:1a} on p.~\pageref{it:1a})
we can prove:

\begin{theorem}
  There is a description language $U$ for randomness-type prediction systems that is universal
  in the sense that for any description language $F$ for randomness-type prediction systems
  there exists a constant $c$ such that,
  for any randomness-type prediction system $\Lambda$,
  \begin{equation*}
    C_U(\Lambda)
    \le
    C_F(\Lambda) + c.
  \end{equation*}
  There is a prefix-free description language $U'$ for randomness-type prediction systems that is universal
  in the sense that for any prefix-free description language $F$ for randomness-type prediction systems
  there exists a constant $c$ such that,
  for any randomness-type prediction system $\Lambda$,
  \[
    C_{U'}(\Lambda)
    \le
    C_F(\Lambda) + c.
  \]
\end{theorem}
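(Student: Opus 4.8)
The plan is to reuse the standard universal-machine construction from the proof of Theorem~\ref{thm:complexity-of-prediction-systems} almost verbatim, so the only real work is to secure a jointly recursively enumerable enumeration $F^1,F^2,\dots$ of \emph{all} description languages for randomness-type prediction systems (and, for the second part, of all prefix-free ones). The obstacle is exactly the one met in Theorem~\ref{thm:universal-randomness-type}: a na\"ive recursive enumeration of recursively enumerable sets $\{(d,\sigma,m)\}$ produces functions $F^k$ that need not send every description $d$ to a genuine randomness-type prediction system, because the sets $F^k_m(d)$ may fail to be nested or may violate the density bound~\eqref{eq:randomness-condition}.

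First I would show that these defining conditions can be enforced effectively, exactly as item~\ref{it:1a} was enforced in Theorem~\ref{thm:universal-randomness-type}. The key observation is that, for a finite collection of finite data sequences, the density condition~\eqref{eq:randomness-condition} is decidable: for a fixed level $m$ the bound $2^{-m}l$ is nondecreasing in $l$ while the count $\lvert\{i\le l:\omega^i\in\Lambda_m\}\rvert$ only jumps at lengths $l=\lvert\tau\rvert$ of sequences $\tau$ already placed in $\Lambda_m$, so the condition can only be binding at finitely many lengths and along finitely many chains; concretely it suffices to check $\lvert\{\tau'\in\Lambda_m:\tau'\sqsubseteq\tau\}\rvert\le 2^{-m}\lvert\tau\rvert$ for each already-placed $\tau$. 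Nestedness is likewise a local condition. I would therefore start from a raw jointly recursively enumerable family and filter it: when the raw enumeration asserts $\sigma\in F^k_m(d)$, commit $\sigma$ to $F^k_j(d)$ for all $j\le m$ (as forced by nestedness) precisely when this keeps~\eqref{eq:randomness-condition} intact at every affected level, and discard the assertion otherwise. Since a genuine description language never triggers a rejection, the filtered $F^k$ reproduces it exactly; and since the filter guarantees each $F^k$ sends every $d$ to a randomness-type prediction system, each $F^k$ is itself a legitimate description language. Thus $F^1,F^2,\dots$ is a jointly recursively enumerable list of description languages that contains every one of them.

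Having this list, I would define the universal $U$ by letting $1^k0d$ describe $F^k(d)$ (and sending every string not of this form to the all-empty system), which yields $C_U(\Lambda)\le C_{F^k}(\Lambda)+k+1$ and hence the first claim with $c:=k+1$. For the prefix-free statement I would run the same filtering on an enumeration of prefix-free description languages, additionally rejecting any assertion that would place into $\dom(F^k)$, via~\eqref{eq:dom}, a description comparable with one already committed there, so that each $\dom(F^k)$ stays prefix-free; setting $U'(1^k0d):=F^k(d)$ then works because $\{1^k0:k\in\mathbb{N}\}$ is a prefix code and each $\dom(F^k)$ is prefix-free, so $\dom(U')$ is prefix-free as required. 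The one genuinely new ingredient over Theorem~\ref{thm:complexity-of-prediction-systems} is the effective enforcement of the randomness-type conditions in the enumeration step, and I expect this decidability-of-density argument to be the main point to get right, while the $1^k0d$ packaging is entirely routine.
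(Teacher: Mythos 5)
Your proposal is correct and follows essentially the same route as the paper, whose proof consists precisely of the remark that the theorem is proved ``analogously to Theorem~\ref{thm:complexity-of-prediction-systems} (but using the fact that we can enforce item~\ref{it:1a})'' --- that is, the $1^k0d$ packaging combined with exactly the filtering of a raw joint enumeration that you describe. Your elaborations (decidability of the density condition~\eqref{eq:randomness-condition} by checking it only at the lengths of already-placed sequences, and the extra rejection rule keeping each effective domain prefix-free) correctly fill in details the paper leaves implicit.
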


We fix a universal description language $U$ for randomness-type prediction systems
and call $C(\Lambda):=C_U(\Lambda)$ the \emph{plain complexity} of $\Lambda$.
And we fix a universal prefix-free description language $U'$ for randomness-type prediction systems
and call $K(\Lambda):=C_{U'}(\Lambda)$ the \emph{prefix complexity} of $\Lambda$.

\begin{theorem}
  There exists a constant $c\in\mathbb{N}$ such that,
  for any randomness-type prediction system $\Lambda$ and any $m\in\mathbb{N}_0$,
  $\Lambda_{m+K(\Lambda)+c}\subseteq\mathcal{D}_m$.
\end{theorem}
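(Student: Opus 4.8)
The plan is to mimic the construction used in the proof of Theorem~\ref{thm:universal-randomness-type}, but now weighting the enumerated randomness-type prediction systems according to their prefix complexity rather than by their enumeration index $k$. Recall that in that earlier proof we combined the enumeration $\Lambda^1,\Lambda^2,\ldots$ via $\mathcal{D}_m:=\bigcup_{k=1}^{\infty}\Lambda^k_{m+k}$, and the shift by $k$ was exactly what made the tail sum $\sum_k 2^{-m-k}l$ telescope to $2^{-m}l$. The point of the present theorem is that we can replace the crude index shift $k$ by a shift tied to the prefix complexity $K(\Lambda)$, so that a system of small description length is penalized only slightly.

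First I would fix the universal prefix-free description language $U'$ for randomness-type prediction systems, so that $K(\Lambda)=C_{U'}(\Lambda)$. The idea is to define
\[
  \mathcal{D}_m
  :=
  \bigcup_{d\in\dom(U')}
  U'_{m+|d|}(d),
\]
where $U'_j(d)$ denotes the $j$th set in the randomness-type prediction system $U'(d)$. To see that this is again a randomness-type prediction system, fix $\omega$ and $l$ and bound the number of prefixes of $\omega$ landing in $\mathcal{D}_m$ by $\sum_{d\in\dom(U')} 2^{-(m+|d|)}l$. Since $\dom(U')$ is prefix-free, Kraft's inequality gives $\sum_{d\in\dom(U')}2^{-|d|}\le 1$, so the whole sum is at most $2^{-m}l$, exactly as required by \eqref{eq:randomness-condition}. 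Nestedness of the $\mathcal{D}_m$ in $m$ is immediate since each $U'(d)$ is itself nested and the shift $m+|d|$ is monotone in $m$; joint enumerability follows from the joint enumerability built into the definition of a description language for randomness-type prediction systems.

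It then remains to verify the claimed domination. Given any randomness-type prediction system $\Lambda$, let $d^*$ be a shortest description of $\Lambda$ under $U'$, so $|d^*|=K(\Lambda)$ and $U'(d^*)=\Lambda$. Taking the single term $d=d^*$ in the union shows
\[
  \Lambda_{m+K(\Lambda)}
  =
  U'_{m+|d^*|}(d^*)
  \subseteq
  \mathcal{D}_m
\]
for every $m\in\mathbb{N}_0$. This is slightly sharper than the stated conclusion (it gives $c=0$ for this particular $\mathcal{D}$); the additive constant $c$ in the theorem then absorbs the standard switch from our purpose-built $\mathcal{D}$ to the fixed universal randomness-type prediction system of Theorem~\ref{thm:universal-randomness-type}, exactly as the final sentence of the proof of Theorem~\ref{thm:precise} passes from $\mathcal{V}$ to $\mathcal{U}$.

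I expect the only genuine subtlety to be the bookkeeping that makes $\mathcal{D}$ a bona fide randomness-type prediction system rather than just a family satisfying the counting bound: one must ensure the union over the infinite prefix-free set $\dom(U')$ is itself jointly enumerable and that the enumeration can be carried out so that item~\ref{it:1a}-style well-formedness is preserved, but this is the same enforcement trick already used in Theorem~\ref{thm:universal-randomness-type}. The substantive inequality is entirely carried by Kraft's inequality applied to the prefix-free effective domain, which is precisely why prefix complexity $K(\Lambda)$, and not plain complexity, appears in the shift.
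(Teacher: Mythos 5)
Your proposal is correct and takes essentially the same approach as the paper's own proof: the paper likewise defines $\mathcal{D}'_m:=\bigcup_{d}U'_{m+\left|d\right|}(d)$, verifies the error bound via $\sum_{d\in\dom(U')}2^{-m-\left|d\right|}l\le 2^{-m}l$ (i.e., Kraft's inequality for the prefix-free effective domain, which you make explicit), obtains $\Lambda_{m+K(\Lambda)}\subseteq\mathcal{D}'_m$ from a shortest description, and uses the additive constant $c$ to pass to the fixed universal randomness-type prediction system. The only cosmetic difference is that you take the union over $\dom(U')$ rather than over all of $\mathbb{2}^*$, which changes nothing.
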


\begin{proof}
  Let $U'$ be our chosen universal prefix-free description language
  for randomness-type prediction systems.
  Analogously to the proof of Theorem~\ref{thm:universal-randomness-type},
  we can then combine all randomness-type prediction systems into one system $\mathcal{D}'$
  by setting
  \begin{equation}\label{eq:D}
    \mathcal{D}'_m
    :=
    \bigcup_{d\in\mathbb{2}^*}
    U'_{m+\left|d\right|}(d)
  \end{equation}
  (cf.\ \eqref{eq:setting}).
  We again get a randomness-type prediction system:
  \begin{align*}
    \left|
      \left\{
        i\in\{1,\ldots,l\}
        \st
        \omega^i\in\mathcal{D}'_m
      \right\}
    \right|
    &=
    \left|
      \bigcup_{d\in\mathbb{2}^*}
      \left\{
        i\in\{1,\ldots,l\}
        \st
        \omega^i\in U'_{m+\left|d\right|}(d)
      \right\}
    \right|\\
    &\le
    \sum_{d\in\dom(U')}
    \left|
      \left\{
        i\in\{1,\ldots,l\}
        \st
        \omega^i\in U'_{m+\left|d\right|}(d)
      \right\}
    \right|\\
    &\le
    \sum_{d\in\dom(U')}
    2^{-m-\left|d\right|} l
    \le
    2^{-m} l.
  \end{align*}
  The inclusion $\Lambda_{m+K(\Lambda)}\subseteq\mathcal{D}'_m$
  now follows from $\Lambda=U'(d)$ for some $d\in\mathbb{2}^{K(\Lambda)}$.
  The addend ``${}+c$'' allows us to replace the randomness-type prediction system $\mathcal{D}'$ defined by~\eqref{eq:D}
  by our chosen universal randomness-type prediction system $\mathcal{D}$.
\end{proof}

In conclusion of this section we will reword our definition of a universal prediction system
to make it more similar to that of a universal randomness-type prediction system.
A \emph{complexity-type prediction system} is a jointly enumerable family $\Lambda$
of sets $\Lambda_m\subseteq(\mathbf{X}\times\mathbb{2})^*$ of finite data sequences such that,
for all $m\in\mathbb{N}_0$ and $\omega\in(\mathbf{X}\times\mathbb{2})^{\infty}$,
\begin{equation}\label{eq:requirement}
  \left|
    \left\{
      i\in\mathbb{N}
      \st
      \omega^i\in\Lambda_m
    \right\}
  \right|
  \le
  2^{m}.
\end{equation}

\begin{theorem}\label{thm:universal-complexity-type}
  There exists a complexity-type prediction system $\mathcal{V}$ such that,
  for any complexity-type prediction system $\Lambda$,
  there exists $c\in\mathbb{N}$ such that, for all $m\in\mathbb{N}_0$,
  $\Lambda_{m}\subseteq\mathcal{V}_{m+c}$.
\end{theorem}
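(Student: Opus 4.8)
The plan is to follow almost verbatim the construction in the proof of Theorem~\ref{thm:universal-randomness-type}, adjusting only the direction of the index shift to account for the fact that in a complexity-type prediction system a larger threshold $m$ is \emph{more} permissive (it allows up to $2^m$ prefixes along each $\omega$) rather than less permissive. First I would check that the complexity-type prediction systems can be jointly enumerated $\Lambda^1,\Lambda^2,\ldots$ by exactly the same device used there: start from a recursive enumeration of all candidate jointly enumerable families, and, when listing the elements of the underlying recursively enumerable set one by one, admit each new triple into $\Lambda^k$ only if doing so does not violate the defining inequality~\eqref{eq:requirement}. This filtering guarantees that every $\Lambda^k$ is a genuine complexity-type prediction system while leaving every genuine complexity-type prediction system unchanged, so the enumeration is both sound and complete. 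Note that, unlike the randomness-type definition, the complexity-type definition imposes no nesting condition, so there is no analogue of nesting to preserve under the forthcoming union.

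Next I would amalgamate these systems by setting
\[
  \mathcal{V}_m := \bigcup_{k=1}^{\infty} \Lambda^k_{m-k},
\]
with the convention that $\Lambda^k_{m-k}:=\emptyset$ whenever $m<k$, so that only the finitely many terms with $k\le m$ can be nonempty. The verification that $\mathcal{V}$ satisfies~\eqref{eq:requirement} is then the same union-bound computation as in Theorem~\ref{thm:universal-randomness-type}, except that the geometric sum now runs in the opposite direction:
\[
  \left|\left\{i\in\mathbb{N}\st\omega^i\in\mathcal{V}_m\right\}\right|
  \le
  \sum_{k=1}^{m}
  \left|\left\{i\in\mathbb{N}\st\omega^i\in\Lambda^k_{m-k}\right\}\right|
  \le
  \sum_{k=1}^{m} 2^{m-k}
  =
  2^m-1
  \le
  2^m .
\]
Universality is then immediate: taking $\Lambda=\Lambda^k$, the term indexed by $k$ in the definition of $\mathcal{V}_{m+k}$ is $\Lambda^k_{(m+k)-k}=\Lambda^k_m$, whence $\Lambda_m=\Lambda^k_m\subseteq\mathcal{V}_{m+k}$, so the constant $c:=k$ witnesses $\Lambda_m\subseteq\mathcal{V}_{m+c}$ for all $m\in\mathbb{N}_0$.

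The only point requiring care — and the closest thing to an obstacle in an otherwise routine argument — is the bookkeeping of the index shift. In the randomness-type case~\eqref{eq:setting} one takes $\Lambda^k_{m+k}$ because raising the threshold shrinks the allowed count $2^{-m}l$; here the monotonicity is reversed, so one must shift \emph{downward} to $\Lambda^k_{m-k}$ and truncate at $k=m$. It is precisely this truncation that keeps the finite geometric sum $\sum_{k=1}^{m}2^{m-k}=2^m-1$ within the budget $2^m$, so that $\mathcal{V}$ remains a legitimate complexity-type prediction system. Everything else carries over unchanged from the proof of Theorem~\ref{thm:universal-randomness-type}.
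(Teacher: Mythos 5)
Your proof is correct and is exactly the argument the paper intends: Theorem~\ref{thm:universal-complexity-type} is stated in the paper without proof, as the evident analogue of Theorem~\ref{thm:universal-randomness-type}, and you reproduce that proof with the only necessary change --- the reversed index shift $\mathcal{V}_m:=\bigcup_{k\le m}\Lambda^k_{m-k}$, whose budget check $\sum_{k=1}^{m}2^{m-k}=2^m-1\le 2^m$ you carry out correctly. The filtered enumeration (simpler here, since no nesting condition must be preserved) and the choice $c=k$ are also right, so nothing is missing.
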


Fix a complexity-type prediction system $\mathcal{V}$
satisfying the condition in Theorem~\ref{thm:universal-complexity-type}
and call it the \emph{universal complexity-type prediction system}.
The following analogue of Theorem~\ref{thm:connection-1} shows that this is not an essentially new notion.

\begin{theorem}\label{thm:connection-5}
  There is a constant $c>0$ such that, for all $m\in\mathbb{N}_0$,
  \begin{equation}\label{eq:connection-5}
    \{\sigma\st\CCC(\sigma)\le m-c\}
    \subseteq
    \mathcal{V}_{m}
    \subseteq
    \{\sigma\st\CCC(\sigma)\le m+c\}.
  \end{equation}
\end{theorem}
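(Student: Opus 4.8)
The plan is to prove the two inclusions in~\eqref{eq:connection-5} separately, treating the left-hand one (which needs only the universality of $\mathcal{V}$) as the easy direction and the right-hand one (which needs an effective decomposition of $\mathcal{V}_m$ into laws of nature) as the substantive one. The whole argument runs parallel to the proof of Theorem~\ref{thm:connection-1}, with $m$ playing the role that $\log N$ plays there and the global counting bound~\eqref{eq:requirement} playing the role of prefix-freeness.

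For the left inclusion I would first turn $\CCC$ into a complexity-type prediction system. Set $\Lambda_m:=\{\sigma\st\CCC(\sigma)\le m-1\}$. This family is jointly enumerable because $\CCC$ is upper semicomputable, and it satisfies~\eqref{eq:requirement}: if $\omega^{i}\in\Lambda_m$ then $\omega^i$ lies in some law of nature $L$ with $C(L)\le m-1$; distinct prefixes $\omega^i$ force distinct such $L$ (a law is prefix-free, so it contains at most one prefix of $\omega$), and the number of laws with $C(L)\le m-1$ is at most the number $2^m-1$ of descriptions of length $\le m-1$. Hence $\Lambda$ is a complexity-type prediction system, and Theorem~\ref{thm:universal-complexity-type} gives a constant $c$ with $\Lambda_m\subseteq\mathcal{V}_{m+c}$ for all $m$; reindexing yields $\{\sigma\st\CCC(\sigma)\le m-c-1\}\subseteq\mathcal{V}_m$, the left inclusion. (Alternatively one could take $\Lambda_m:=\mathcal{U}_{\le 2^m}$, which is complexity-type because it is a union of $2^m$ prefix-free laws, and combine $\mathcal{U}_{\le 2^m}\subseteq\mathcal{V}_{m+c}$ with the left inclusion of Theorem~\ref{thm:connection-1}.)

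For the right inclusion I would exhibit a description language $F$ for laws of nature with $C_F(\sigma)\le m+O(1)$ for every $\sigma\in\mathcal{V}_m$; since $\CCC\le C_F+O(1)$ this gives $\mathcal{V}_m\subseteq\{\sigma\st\CCC(\sigma)\le m+c\}$. The key combinatorial point is that~\eqref{eq:requirement} says exactly that every $\sqsubseteq$-chain inside $\mathcal{V}_m$ has length at most $2^m$ (a chain is a set of nested prefixes of some $\omega$). By Mirsky's theorem $\mathcal{V}_m$ therefore decomposes into at most $2^m$ antichains, i.e.\ laws of nature. To make this effective and uniform in $m$, I would enumerate $\mathcal{V}_m$ and colour its elements by the same greedy (first-fit) rule already used in the proofs of Theorems~\ref{thm:K-M} and~\ref{thm:randomness-complexity}: give each newly enumerated $\sigma$ the least colour not yet occupied by an already-coloured element comparable with $\sigma$. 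Each colour class is then prefix-free, and I would let $F(d)$, for $d$ of length $m$, be the class with colour $d$ in the colouring of $\mathcal{V}_{\left|d\right|}$; then every $\sigma\in\mathcal{V}_m$ receives some description of length $\approx m$.

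The main obstacle is bounding the number of colours this greedy rule uses by $O(2^m)$ (ideally by $2^m$), since only then is $C_F(\sigma)\le m+O(1)$. This is a statement about first-fit colouring of the comparability graph of a forest (the prefix order restricted to $\mathcal{V}_m$), whose clique number is the length of the longest chain, hence $\le 2^m$. The delicate case is an element whose blocked colours come partly from its prefixes and partly from already-coloured extensions lying on different branches; a naive ``count the comparable predecessors'' bound is not enough because extensions can branch. I would resolve it by the invariant that first-fit assigns $\sigma$ a colour not exceeding the length of the longest chain through $\sigma$ formed by already-coloured comparable elements, proved by induction on the enumeration order; this quantity is bounded by the longest chain in $\mathcal{V}_m$, hence by $2^m$. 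Since comparability graphs of forests are interval graphs, even the coarse first-fit bound $O(\omega)$ on the number of colours would already give $O(2^m)$, which suffices to absorb the discrepancy into the additive constant $c$.
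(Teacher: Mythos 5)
Your proposal follows essentially the same route as the paper's proof. For the left-hand inclusion of \eqref{eq:connection-5} the paper just says ``obvious''; your argument (turn $\CCC$ into a complexity-type prediction system, using the facts that at most $2^m-1$ laws of nature have plain complexity at most $m-1$ and that a law, being prefix-free, contains at most one prefix of any $\omega$, then apply Theorem~\ref{thm:universal-complexity-type}) is a correct way of spelling this out, as is your alternative via $\mathcal{U}_{\le 2^m}$ and Theorem~\ref{thm:connection-1}. For the right-hand inclusion you use exactly the paper's construction: enumerate $\mathcal{V}_m$, colour the enumerated sequences greedily (first-fit) so that comparable sequences get distinct colours, and let the colour classes be the laws of nature of a description language.

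Where you differ from the paper is in how the crux --- that first-fit needs only $O(2^m)$ colours --- is established, and here your primary argument is under-justified while your fallback is sound. The invariant you state (the colour of $\sigma$ is at most the length of the longest chain through $\sigma$ formed by already-coloured comparable elements) is true, but it cannot be ``proved by induction on the enumeration order'' as it stands: to colour $\sigma_i$ one must bound the number of \emph{distinct} colours among its already-coloured descendants, which sit in pairwise incomparable sibling subtrees, and the invariant alone says nothing about how colours are shared across those subtrees. The induction hypothesis has to be strengthened to a structural property of first-fit on forest orders: the set of colours used in the subtree below a vertex is an initial segment of the set of colours unused on the path to it, whence the colour sets of sibling subtrees are nested, and so the number of colours used below $\sigma_i$ equals the longest coloured chain below $\sigma_i$. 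This strengthening is precisely the content of the paper's proof (its five enumerated properties) and, judging by the acknowledgments, is precisely the gap that the journal referees had to fill. Your fallback, on the other hand, does close the argument rigorously and is a genuinely different way to finish: the comparability graph of a forest order is a nested-interval (trivially perfect) graph, so the known bound of $O(\omega)$ colours for first-fit colouring of interval graphs presented in arbitrary order (Kierstead and successors) gives $O(2^m)$ colours, which the additive constant $c$ absorbs. In fact a different citation gives the sharp bound: trivially perfect graphs are $P_4$-free, and by the Christen--Selkow theorem the Grundy (worst-order first-fit) number of a $P_4$-free graph equals its clique number, so first-fit uses at most $2^m$ colours in any enumeration order --- exactly the bound the paper proves by hand.
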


\begin{proof}
  The left-hand inclusion in~\eqref{eq:connection-5} is obvious.
  The right-hand inclusion is witnessed by the following description language for laws of nature.
  Enumerate in a computable manner all finite data sequences $\sigma_1,\sigma_2,\ldots$
  in $\mathcal{V}_m$.
  Order all binary strings in $\mathbb{2}^m$ lexicographically.
  Assign $0^m$ (i.e., the sequence $(0,\ldots,0)$ of length $m$)
  to $\sigma_1$ as its description.
  For $i=2,3,\ldots$, assign to $\sigma_i$ as its description
  the first string in $\mathbb{2}^m$ that has not being assigned as yet
  to the strings among $\sigma_1,\ldots,\sigma_{i-1}$ that are comparable with $\sigma_i$.
  The finite data sequences with the same description
  now form a law of nature with that description.
  Repeat for all $m\in\mathbb{N}_0$.

  The only thing that remains to be checked is that we will never run out of strings in $\mathbb{2}^m$.
  Let us check this carefully.
  It is convenient to think of the elements of $\mathbb{2}^m$ as colours,
  and our goal is to show that we will never run out of the $2^m$ available colours.
  We look at the set $(\mathbf{X}\times\mathbb{2})^*$ of all finite data sequences as a tree
  (rooted at $\Box$ and with $\sigma$ and $\sigma'$ connected with an edge
  when $\sigma\sqsubset\sigma'$ but there is no $\sigma''$ such that $\sigma\sqsubset\sigma''\sqsubset\sigma'$).
  \emph{Siblings} are non-empty finite data sequences that differ only in their last element.
  Let us fix some stage $i$ of the construction in the previous paragraph;
  at the beginning of this stage we have a partial colouring of the tree $(\mathbf{X}\times\mathbb{2})^*$:
  the vertices $\sigma_1,\ldots,\sigma_{i-1}$ have been coloured,
  and our task is to colour $\sigma_i$.
  For each vertex $\sigma$,
  let $T_{\sigma}$ be the set of all colours used in the tree rooted at $\sigma$,
  and $P_{\sigma}$ be the set of all colours used along the path from the root $\Box$ to $\sigma$
  (not including $\sigma$).
  Notice the following properties of our construction:
  \begin{enumerate}
  \item\label{it:1b}
    The colours of comparable vertices are different.
  \item
    If a vertex $\sigma$ gets colour $d$,
    then each smaller color is used either for a predecessor of $\sigma$ or for a descendant of $\sigma$.
  \item
    If $\sigma$ is a vertex (coloured or not),  
    then the sets $T_{\sigma}$ and $P_{\sigma}$ are disjoint (by Property~\ref{it:1b}),
    and $T_{\sigma}$ is an initial segment in the complement $\mathbb{2}^d\setminus P_{\sigma}$ of $P_{\sigma}$.
    Indeed, if a colour appears in $T_{\sigma}$,
    it is the colour of some vertex $\sigma'\sqsupseteq\sigma$,
    and so all smaller colours appear either before $\sigma'$
    (therefore, in $P_{\sigma}$ or $T_{\sigma}$)
    or after $\sigma'$ (therefore, in $T_{\sigma}$).
  \item
    The sets $T_{\sigma}$ and $T_{\sigma'}$ for any two siblings $\sigma$ and $\sigma'$ are comparable with respect to inclusion.
    Indeed, they are two initial segments of the same complement.
  \item
    For each vertex $\sigma$ the total number of colours used in $T_{\sigma}$ is minimal,
    in the sense of being equal to the maximal number of coloured vertices on the paths in $T_{\sigma}$.
    This can be shown by an inductive argument using the previous property.
  \end{enumerate}
  The last property, in combination with \eqref{eq:requirement},
  shows that we will have at least one colour left for $\sigma_i$.
\end{proof}

\section{Universal conformal prediction under the IID assumption}
\label{sec:conformal}

Up to this point our exposition has been completely probability-free,
but in this section we will consider the special case
where the data are generated in the IID manner.
For basic definitions of the theory of conformal prediction see, e.g., \cite{Vovk:2014Bala}.
For simplicity, we will only consider computable conformity measures
that take values in the set~$\mathbb{Q}$ of rational numbers.
Remember that $\mathcal{D}$ is the universal randomness-type prediction system,
as introduced in the previous section;
let us set $\mathcal{D}_m:=(\mathbf{X}\times\mathbb{2})^*$ for $m<0$
(i.e., we include all finite data sequences in $\mathcal{D}_m$ for negative $m$).

\begin{theorem}\label{thm:conformal}
  Let $\Gamma$ be a conformal predictor
  based on a computable conformity measure
  taking values in~$\mathbb{Q}$.
  Then there exists $c\in\mathbb{N}$ such that,
  for almost all infinite data sequences
  $\omega=((x_1,y_1),(x_2,y_2),\ldots)\in(\mathbf{X}\times\mathbb{2})^{\infty}$
  and all significance levels $\epsilon\in(0,1)$,
  from some $l$ on we will have
  \begin{equation}\label{eq:conformal}
    \Pi_{\mathcal{D}_{\lfloor-\log\epsilon\rfloor-c}}((\omega^{l-1},x_{l}))
    \subseteq
    \Gamma^{\epsilon}((\omega^{l-1},x_{l})).
  \end{equation}
\end{theorem}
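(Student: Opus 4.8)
The plan is to realize the conformal predictor $\Gamma$ as, essentially, a randomness-type prediction system and then invoke the universality of $\mathcal{D}$ from Theorem~\ref{thm:universal-randomness-type}. For a finite data sequence $\tau=(z_1,\ldots,z_l)$ the conformal predictor assigns to its last observation a p-value $p(\tau)\in\{1/l,\ldots,1\}$ computed from the bag $\{z_1,\ldots,z_l\}$, and $y\notin\Gamma^{\epsilon}((\sigma,x))$ holds exactly when $p((\sigma,(x,y)))\le\epsilon$. Writing $\Pi_{\mathcal{D}_m}(s)=\{y:(s,y)\notin\mathcal{D}_m\}$, the inclusion~\eqref{eq:conformal} is therefore equivalent to the implication $p((\sigma,(x,y)))\le\epsilon\Rightarrow(\sigma,(x,y))\in\mathcal{D}_{\lfloor-\log\epsilon\rfloor-c}$, to be established, $Q^\infty$-a.s., for each label $y\in\mathbb{2}$ and all large $l$ (the levels $\lfloor-\log\epsilon\rfloor-c<0$ being trivial by the stated convention $\mathcal{D}_m=(\mathbf{X}\times\mathbb{2})^*$).

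First I would build a candidate randomness-type prediction system out of $\Gamma$. The naive choice $\{\tau:p(\tau)\le2^{-m}\}$ fails the \emph{deterministic} frequency bound~\eqref{eq:randomness-condition}: along a monotone sequence of conformity scores the online p-value is small at almost every step. I would therefore \emph{rate-cap} it: declare $\tau\in\Lambda^{\circ}_m$ iff $p(\tau)\le2^{-(m+1)}$ and adding $\tau$ keeps the number of $\Lambda^{\circ}_m$-prefixes of $\tau$ (including $\tau$) at most $2^{-m}|\tau|$; since the proper prefixes of $\tau$ form a single chain, this cap is well defined and computable from the computable conformity measure, and $\Lambda^{\circ}$ obeys~\eqref{eq:randomness-condition} on \emph{every} $\omega$ by construction. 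To repair nesting I would pass to $\Lambda_m:=\bigcup_{m'>m}\Lambda^{\circ}_{m'}$, which is nested, still satisfies the frequency bound (its rate is at most $\sum_{m'>m}2^{-m'}=2^{-m}$), contains $\Lambda^{\circ}_{m+1}$, and stays jointly enumerable (for fixed $\tau$ only finitely many $m'$ qualify, as $p(\tau)\ge1/|\tau|>0$). Thus $\Lambda$ is a genuine randomness-type prediction system, and Theorem~\ref{thm:universal-randomness-type} supplies a constant $c_1$ with $\Lambda_{m+c_1}\subseteq\mathcal{D}_m$ for all $m$.

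The probabilistic content enters only in showing that the cap eventually stops biting. By the online validity of conformal predictors under the IID assumption (see \cite{Vovk:2014Bala}), for each fixed threshold $\delta$ the count $|\{i\le l:p(\omega^i)\le\delta\}|$ grows at asymptotic rate at most $\delta$; applying this with $\delta=2^{-(m+1)}$ and taking a countable intersection over the relevant levels, I get that $Q^\infty$-a.s., for every $m$, from some $l$ on the on-path count $|\{i\le l:p(\omega^i)\le2^{-(m+1)}\}|$ stays below $2^{-m}l$ with room linear in $l$ to spare. The built-in factor-$2$ gap between the p-value threshold $2^{-(m+1)}$ and the cap level $2^{-m}$ is exactly what makes this slack strict, so no boundary/atom issues arise. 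Once the count is below the cap, every prefix $\omega^i$ with $p(\omega^i)\le2^{-(m+1)}$ lies in $\Lambda^{\circ}_m\subseteq\Lambda_{m-1}$; and for an \emph{off-path} candidate $\nu=(\omega^{l-1},(x_l,y))$ the same holds, since the prefixes of $\nu$ other than $\nu$ itself are precisely the on-path $\omega^0,\ldots,\omega^{l-1}$, whose fire-count is below the cap with room to spare for the single extra node $\nu$.

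It remains to match levels. Choosing $m=\lfloor-\log\epsilon\rfloor-1$ gives $\epsilon\le2^{-(m+1)}=2^{-\lfloor-\log\epsilon\rfloor}$, so any conformal rejection $p(\nu)\le\epsilon$ passes the p-value test of $\Lambda^{\circ}_m$, whence $\nu\in\Lambda_{m-1}\subseteq\mathcal{D}_{m-1-c_1}$ by the previous paragraph; setting $c:=2+c_1$ yields $\nu\in\mathcal{D}_{\lfloor-\log\epsilon\rfloor-c}$, which is the required implication. Because $\Gamma^{\epsilon}$ and $\lfloor-\log\epsilon\rfloor$ are monotone and piecewise constant in $\epsilon$, it suffices to argue at the countably many break-points, so one a.s.\ event and one constant $c$ serve all $\epsilon\in(0,1)$ simultaneously. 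The main obstacle is the first gap noted above — online conformal p-values do \emph{not} obey the deterministic frequency bound~\eqref{eq:randomness-condition} — which is what forces the rate-capped construction and the careful separation of the deterministic bound (true on every path) from the almost-sure non-binding of the cap (true only under the IID assumption).
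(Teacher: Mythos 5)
Your proposal is correct, and it follows the same overall architecture as the paper's proof: convert $\Gamma$ into a randomness-type prediction system by imposing the frequency bound \eqref{eq:randomness-condition} by fiat, invoke the universality of $\mathcal{D}$ (Theorem~\ref{thm:universal-randomness-type}), use the a.s.\ validity of conformal prediction to show that the imposed cap eventually stops biting, and do the same level bookkeeping $m\approx\lfloor-\log\epsilon\rfloor-1$, absorbing everything into one constant $c$. The differences are in the details, and they are worth recording. The paper takes $\Lambda_m$ to be the errors of $\Gamma^{2^{-m}}$ capped at rate $2^{-m}$ (threshold equal to cap), builds nesting directly into the inductive definition, and places its factor-of-two slack between $\epsilon$ and the threshold ($2^{-m}\ge2\epsilon$); you place the slack between the threshold $2^{-(m+1)}$ and the cap $2^{-m}$, and repair nesting afterwards with the tail unions $\Lambda_m=\bigcup_{m'>m}\Lambda^{\circ}_{m'}$, the geometric series keeping the rate at $2^{-m}$. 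Your placement of the slack is the more robust one: with threshold equal to cap, validity yields only $\limsup_l\,(\text{count of $2^{-m}$-errors})/l\le2^{-m}$, which by itself does not exclude the capped count hovering at the cap and the cap biting---conceivably on an $\epsilon$-error---infinitely often; the paper's one-sentence appeal to Corollary~1.1 of \cite{Vovk:2014Bala} leaves this point to the reader, whereas your gap makes the asymptotic rate at most half the cap, so the cap is eventually slack by a margin linear in $l$, which also absorbs the single off-path candidate $(\omega^{l-1},(x_l,y))$---a point you correctly treat separately.

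One quibble: your motivating claim that the naive set $\{\tau\st p(\tau)\le2^{-m}\}$ can err at almost every step (via monotone conformity scores) requires an infinite observation space. Here $\mathbf{X}$ is finite, so observations of the same type share conformity scores with respect to any bag; consequently each of the $2\left|\mathbf{X}\right|$ types contributes at most $2^{-m}l$ errors among the first $l$ steps, and the naive set violates \eqref{eq:randomness-condition} by at most a factor of $2\left|\mathbf{X}\right|$. It can still violate it, so your rate-capping remains necessary and the proof is unaffected; only the severity claimed for the obstacle is overstated in this finite-alphabet setting.
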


This theorem says that the prediction set output by the universal randomness-type prediction system
is at least as precise as the prediction set output by $\Gamma$,
to within the usual additive constant.

\begin{proof}[Proof of Theorem~\ref{thm:conformal}]
  Without loss of generality we can and will assume $\epsilon\in(0,1/2)$.
  For each such $\epsilon$ set $m:=\lfloor-\log\epsilon\rfloor-1$.
  (Intuitively, we replace $\epsilon$ by a new significance level $2^{-m}$,
  which we make at least twice as large as the original $\epsilon$.)
  Let $\Lambda_m$ be $\Gamma^{2^{-m}}$ forced to satisfy~\eqref{eq:randomness-condition};
  formally, $\Lambda_m$ contains only finite data sequences $\sigma$
  such that $\Gamma^{2^{-m}}$ makes an error when predicting the last label in $\sigma$,
  and $\Lambda$ is defined by induction first on $m$ and then on the length of $\sigma$ as follows:
  $\sigma$ is included in $\Lambda_m$ if and only if:
  \begin{itemize}
  \item
    $\sigma$ is included in all $\Lambda_i$, $i<m$
    (this condition is satisfied automatically if $m=1$);
  \item
    the condition~\eqref{eq:randomness-condition} is satisfied,
    where $l$ is the length of $\sigma$ and $\omega$ is an infinite continuation of $\sigma$.
  \end{itemize}
  By the standard validity property of conformal predictors (\cite{Vovk:2014Bala}, Corollary~1.1),
  we will have
  \begin{equation*}
    \Pi_{\Lambda_{m}}((\omega^{l-1},x_{l}))
    \subseteq
    \Gamma^{\epsilon}((\omega^{l-1},x_{l}))
  \end{equation*}
  from some $l$ on almost surely.
\end{proof}

\begin{remark}
  The proof shows that we can replace the $c$ in \eqref{eq:conformal} by $c+K(\Gamma)$,
  where $c$ now does not depend on $\Gamma$
  and $K(\Gamma)$ is the smallest prefix complexity of the programs
  for computing the conformity measure on which $\Gamma$ is based.
\end{remark}

\section{The theory of Kolmogorov complexity}
\label{sec:Kolmogorov}

In this section we will discuss the theory of Kolmogorov complexity
as a special case of our theory.
We obtain the former by taking $\mathbf{X}$ and the label space
($\mathbb{2}$ in this paper) of size one.
More generally, the theory of Kolmogorov complexity embeds
into our theory when we fix an object and a label
and only consider sequences of identical observations
with those object and label.
Therefore, let us fix an element $\mathbf{x}$ of $\mathbf{X}$
and a label, say $0$.

Let $o\in(\mathbf{X}\times\mathbb{2})^*$
be the infinite data sequence $((\mathbf{x},0),(\mathbf{x},0),\ldots)$
consisting of identical observations $(\mathbf{x},0)$.

\begin{theorem}\label{thm:Kolmogorov}
  \begin{align}
    \CCC(o^n) &= C(n)+O(1),
    \label{eq:Kolmogorov-C}\\
    \KKK(o^n) &= K(n)+O(1),
    \notag\\
    -\log\MMM(o^n) &= -\log M(n)+O(1).
    \notag
  \end{align}
\end{theorem}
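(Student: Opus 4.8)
The plan is to prove each of the three equalities by establishing the two matching one-sided bounds, all of which reduce to facts already available in the excerpt. The common theme is that, because every law of nature is prefix-free, a law can contain \emph{at most one} prefix $o^k$ of the constant sequence $o$; hence along $o$ the time-complexity machinery collapses onto the single integer parameter $n$, which is what makes the identification with ordinary Kolmogorov complexity of $n$ exact up to $O(1)$.

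For the upper bounds $\CCC(o^n) \le C(n) + O(1)$ and $\KKK(o^n) \le K(n) + O(1)$ I would simply invoke Theorem~\ref{thm:trivial} with $\left|o^n\right| = n$. For the semimeasure upper bound $-\log\MMM(o^n) \le -\log M(n) + O(1)$ I would exhibit a lower semicomputable time semimeasure $P$ concentrated on $o$, namely $P(o^n) := M(n)$ and $P(\sigma) := 0$ for $\sigma$ not of the form $o^k$; this $P$ is lower semicomputable because $o$ is computable and $M$ is lower semicomputable. The defining inequality $\sum_l P(\omega^l) \le 1$ holds for every $\omega$: if $\omega = o$ it is $\sum_l M(l) \le 1$, and otherwise only finitely many prefixes of $\omega$ agree with $o$, so the sum is again bounded by $\sum_l M(l) \le 1$. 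Since $\MMM$ dominates $P$ to within a constant factor, $\MMM(o^n) \ge c\,M(n)$, which is the claimed bound.

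The lower bounds exploit prefix-freeness directly. Given the law $L$ attaining $\CCC(o^n) = \min_{L \ni o^n} C(L)$ and its shortest description $d$ under the universal description language $U$ for laws of nature, I would define a description mode for integers that, on input $d$, enumerates $U(d)$ and outputs the index $k$ of the first (hence, by prefix-freeness, the unique) element $o^k$ it produces. For the optimal $d$ this outputs exactly $n$, so $C(n) \le \left|d\right| + O(1) = \CCC(o^n) + O(1)$. The same construction applied to the universal prefix-free description language $U'$ yields $K(n) \le \KKK(o^n) + O(1)$, the point being that the resulting machine inherits a prefix-free domain, since its domain is a subset of $\dom(U')$. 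For the semimeasure lower bound I would observe that $n \mapsto \MMM(o^n)$ is itself a lower semicomputable semimeasure on $\mathbb{N}_0$: the normalization $\sum_n \MMM(o^n) \le 1$ is precisely the time-semimeasure condition applied to $\omega = o$. Domination by the universal $M$ then gives $\MMM(o^n) \le c\,M(n)$, i.e.\ $-\log\MMM(o^n) \ge -\log M(n) - O(1)$, completing \eqref{eq:Kolmogorov-C} and its two companions.

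I do not expect a genuine obstacle, since every step reduces to an already-proved statement or a routine construction; the one place demanding care is the prefix-complexity lower bound, where I must verify that the integer-description machine built from $U'$ really has prefix-free domain, so that the universal prefix machine dominates it with only an additive constant. The only other point worth stating explicitly is that the constant sequence $o$ contributes at most one prefix to any law, which is immediate from prefix-freeness and is exactly what makes the passage between ``time of the last observation'' and ``the integer $n$'' lossless.
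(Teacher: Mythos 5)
Your proposal is correct and follows essentially the same route as the paper: upper bounds via Theorem~\ref{thm:trivial} (equivalently, reading a description of $n$ as a description of the law $(\mathbf{X}\times\mathbb{2})^n$), and lower bounds by converting a description of a law $L\ni o^n$ into a description of the length of the unique element of $L\cap\Sigma(o)$, uniqueness being exactly the prefix-freeness point you emphasize. The paper proves only the plain-complexity case and dismisses the other two as ``similar''; your explicit semimeasure constructions (the semimeasure concentrated on $o$ with values $M(n)$, and the restriction $n\mapsto\MMM(o^n)$ viewed as a semimeasure on $\mathbb{N}_0$) are precisely the intended instantiations of that remark.
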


\begin{proof}
  We will only prove~\eqref{eq:Kolmogorov-C};
  the other two relations can be proved similarly.
  Reinterpreting a description of $n\in\mathbb{N}$
  as a description of the law of nature $(\mathbf{X}\times\mathbb{2})^{n}$,
  we obtain the inequality $\le$ in~\eqref{eq:Kolmogorov-C}.
  (Alternatively, we can notice that~\eqref{eq:Kolmogorov-C}
  is a special case of the inequality $\le$ of~\eqref{eq:C-trivial}.)
  And reinterpreting a description of a law of nature $L$
  as a description of the length of the only element of $L\cap\Sigma(o)$,
  we obtain the inequality $\ge$ in~\eqref{eq:Kolmogorov-C}.
\end{proof}

Combining Theorem~\ref{thm:Kolmogorov} with the standard fact that $K(n)=-\log M(n)+O(1)$
(e.g., \cite{Shen:2015}, Theorem 7.29),
we can see that Theorem~\ref{thm:K-M} can be improved when restricted to $\Sigma(o)$:
in this case $-\log\MMM=\KKK+O(1)$.
Unfortunately, the equality cannot be extended to all finite data sequences:
see Theorem~\ref{thm:Shen}.

\section{Conclusion}

In this paper we have ignored the computational resources,
first of all, the required computation time and space (memory).
Developing versions of our definitions and results taking into account the time of computations is a natural next step.
In analogy with the theory of Kolmogorov complexity,
we expect that the simplest and most elegant results will be obtained for computational models
that are more flexible than Turing machines,
such as Kolmogorov--Uspensky algorithms and Sch\"onhage machines.

An interesting open question is whether Theorem~\ref{thm:K-M} can be improved
to $-\log\MMM=\KKK+O(1)$ by modifying the definition of prefix time complexity
(Theorem~\ref{thm:Shen} says that a modification is necessary,
and Remark~\ref{rem:Andreev} shows that intermediate time complexity does not work).
Another open question is whether plain complexity $C$
can be improved to (or almost to) prefix complexity $K$ in Theorem~\ref{thm:precise-opposite}.

More open questions are raised by the definition of universal randomness-type prediction systems
in Section~\ref{sec:randomness}:
how can such prediction systems be characterized in terms of other notions
(such as plain and prefix time complexity, time randomness deficiency, and \emph{a priori} time semimeasure)
introduced in this paper or in terms of similar notions?
(In Theorem~\ref{thm:randomness-complexity} we gave only the most obvious connection.)

\subsection*{Acknowledgments}

We thank the anonymous referees of the conference and journal versions of this paper for helpful comments.
In particular, comments made by the referees of the journal version
have led to Remarks~\ref{rem:Shen-1} and~\ref{rem:Shen-2},
and we especially appreciate their generosity in filling a gap in the proof of Theorem~\ref{thm:connection-5}.
  This work has been supported by the Air Force Office of Scientific Research (grant ``Semantic Completions''),
  EPSRC (grant EP/K033344/1),
  and the EU Horizon 2020 Research and Innovation programme (grant 671555).

\end{document}